\def\eqref#1{equation~\ref{#1}}
\def\1{\bm{1}}
\DeclareMathAlphabet{\mathsfit}{\encodingdefault}{\sfdefault}{m}{sl}
\SetMathAlphabet{\mathsfit}{bold}{\encodingdefault}{\sfdefault}{bx}{n}
\newtheorem{definition}{Definition}
\newtheorem{theorem}{Theorem}
\newtheorem{corollary}[theorem]{Corollary}
\newenvironment{proof}[1][Proof]{\noindent\textbf{#1.} }{\ \rule{0.5em}{0.5em}}
\title{
Expanding Reasoning Potential in Foundation Model by Learning Diverse Chains of Thought Patterns
}
\author{Antiquus S.~Hippocampus, Natalia Cerebro \& Amelie P. Amygdale \thanks{ Use footnote for providing further information
about author (webpage, alternative address)---\emph{not} for acknowledging
funding agencies.  Funding acknowledgements go at the end of the paper.} \\
Department of Computer Science\\
Cranberry-Lemon University\\
Pittsburgh, PA 15213, USA \\
\texttt{\{hippo,brain,jen\}@cs.cranberry-lemon.edu} \\
\And
Ji Q. Ren \& Yevgeny LeNet \\
Department of Computational Neuroscience \\
University of the Witwatersrand \\
Joburg, South Africa \\
\texttt{\{robot,net\}@wits.ac.za} \\
\AND
Coauthor \\
Affiliation \\
Address \\
\texttt{email}
}
\author{Xuemiao Zhang\textsuperscript{\rm 1,\rm 3}$^{\ast}$\quad
    Can Ren\textsuperscript{\rm 1,\rm 3}\thanks{Equal contribution.}\quad
    Chengying Tu\textsuperscript{\rm 2,\rm 3}$^{\ast}$ \\
    \bf Rongxiang Weng\textsuperscript{\rm 3}$^{\dagger}$\quad
    Shuo Wang\textsuperscript{\rm 3}\quad
    Hongfei Yan\textsuperscript{\rm 2,\rm 4}\thanks{Corresponding author.}\quad
    Jingang Wang\textsuperscript{\rm 3}\quad
    Xunliang Cai\textsuperscript{\rm 3} \\
    \textsuperscript{\rm 1} Peking University\quad
    \textsuperscript{\rm 2} School of Computer Science, Peking University\quad
    \textsuperscript{\rm 3} Meituan \\
    \textsuperscript{\rm 4} State Key Laboratory of Multimedia Information Processing, Peking University \\
    \texttt{\{zhangxuemiao, yanhf\}@pku.edu.cn}\quad
    \texttt{wengrongxiang@gmail.com}\\
    \texttt{\{tuchengying, 2401210098\}@stu.pku.edu.cn}\\
    \texttt{\{wangshuo81, wangjingang02, caixunliang\}@meituan.com}
}
\acrodef{LLMs}[LLMs]{large language models}
\acrodef{RL}[RL]{reinforcement learning}
\acrodef{SFT}[SFT]{supervised fine-tuning}
\acrodef{CoT}[CoT]{chain-of-thought}
\acrodef{QA}[QA]{question-answer}
\acrodef{DTW}[DTW]{Dynamic Time Warping}
\acrodef{LRMs}[LRMs]{large reasoning models}
\begin{document}

\maketitle

\begin{abstract}
Recent progress in large reasoning models for challenging mathematical reasoning has been driven by \ac{RL}. 
Incorporating long chain-of-thought (CoT) data during mid-training has also been shown to substantially improve reasoning depth. 
However, current approaches often utilize CoT data indiscriminately, leaving open the critical question of which data types most effectively enhance model reasoning capabilities.
In this paper, we define the foundation model's \textit{reasoning potential} for the first time as the inverse of the number of independent attempts required to correctly answer the question, which is strongly correlated with the final model performance.
We then propose utilizing diverse data enriched with high-value reasoning patterns to expand the reasoning potential. 
Specifically, we abstract atomic reasoning patterns from CoT sequences, characterized by commonality and inductive capabilities, and use them to construct a core reference set enriched with valuable reasoning patterns. 
Furthermore, we propose a dual-granularity algorithm involving chains of reasoning patterns and token entropy, efficiently selecting high-value CoT data (CoTP) from the data pool that aligns with the core set, thereby training models to master reasoning effectively. 
Only 10B-token CoTP data enables the 85A6B Mixture-of-Experts (MoE) model to improve by \textbf{9.58\%} on the challenging AIME 2024 and 2025, and to raise the upper bound of downstream \ac{RL} performance by \textbf{7.81\%}.
\footnote{The core code and dataset are available at \url{https://github.com/rc314159-creator/CoTP}.}
\end{abstract}

\section{Introduction\label{sec:intro}}

Recent progress in \ac{LRMs} for challenging mathematical reasoning has largely been driven by post-training optimization, particularly via \ac{RL} frameworks that improve problem-solving abilities through exploratory feedback
~\citep{5team2025glm45agenticreasoningcoding,zheng2025groupsequencepolicyoptimization,schulman2017proximal,shao2024deepseekmath}.
Empirical studies~\citep{chen2025mechanism, yue2025doesRL,liu2025oatzero,zhao2025echo,wen2025reinforcement} have revealed some critical insights that the parameter space of the foundation model inherently contains latent pathways for challenging reasoning and \ac{RL} training effectively operationalizes the explicit manifestation of these implicit capabilities. In other words, \textit{the reasoning capability learned in foundation models directly influences and limits the upper bounds of RL performance}, with certain open-source foundation models like Llama~\citep{grattafiori2024llama3herdmodels} displaying unstable RL performance, emphasizing the urgent need for a thorough exploration of foundation model reasoning capabilities.

Recent studies suggest that blending \ac{QA} data with \ac{CoT} during the mid-training stage, especially long-CoT samples, can markedly enhance the depth of reasoning needed to tackle challenging problems~\citep{wang2025octothinkermidtrainingincentivizesreinforcement,zhang2025largescalediversesynthesismidtraining,tu2025survey}. 
Prevailing approaches primarily focus on augmenting datasets with challenging problems to generate long-CoT trajectories through \textit{knowledge distillation}. 
However, these methods typically employ \ac{CoT} data in a coarse-grained manner, lacking a thorough investigation into the inherent paradigmatic characteristics and deeper essence of reasoning within \ac{CoT} sequences for nuanced and differentiated applications.
Therefore, elucidating the reasoning paradigms that can significantly expand the reasoning potential of foundation models represents a pivotal research direction.

Building on these insights, our research focuses on how to effectively enrich reasoning patterns during mid-training to expand the reasoning potential of foundation models. We first theoretically define the reasoning potential of the foundation model and demonstrate that expanding this potential is equivalent to reducing the average number of reasoning attempts needed to correctly answer the question. We then abstract atomic reasoning patterns from CoT sequences, characterized by commonality and inductive capabilities, and use these to construct a core reference set enriched with valuable reasoning patterns. 
We propose a dual-granularity algorithm using weighted \ac{DTW} to select long-CoT data exhibiting high-value reasoning patterns similar to those in the core set  based on both the chains of reasoning patterns and token entropy.

Extensive experiments demonstrate that
\textit{
mid-training on just 10B high-value reasoning data can significantly expand the reasoning potential of the 85A6B Mixture-of-Experts foundational model (MoE-6B-85B) and substantially raise the upper bounds of RL performance}. In summary, our contributions are as follows:
\begin{enumerate}
\item We theoretically define the reasoning potential of the foundation model for the first time as the inverse of the number of independent attempts required to correctly answer the question and abstract reasoning patterns that exhibit commonality and inductive capabilities from CoT data, guiding the construction of a high-value core reference set.
\item We propose a dual-granularity algorithm involving chains of reasoning patterns and token entropy, which efficiently selects CoT data with high-value reasoning patterns that aligns with the core reference set from the data pool, to effectively enrich reasoning patterns and expand the reasoning potential of foundation models.
\item We construct a long-CoT reasoning dataset, CoTP, which enables the 85A6B MoE model to maintain general performance while achieving a \textbf{9.58\%} improvement on the challenging AIME 2024 and 2025 and raising the upper bound of downstream \ac{RL} performance by \textbf{7.81\%}.
\end{enumerate}

\begin{figure}[t]
\hspace{0.03\linewidth}
\begin{subfigure}[t]{0.63\linewidth}
    \centering
    \includegraphics[width=\linewidth, height=5cm, keepaspectratio]{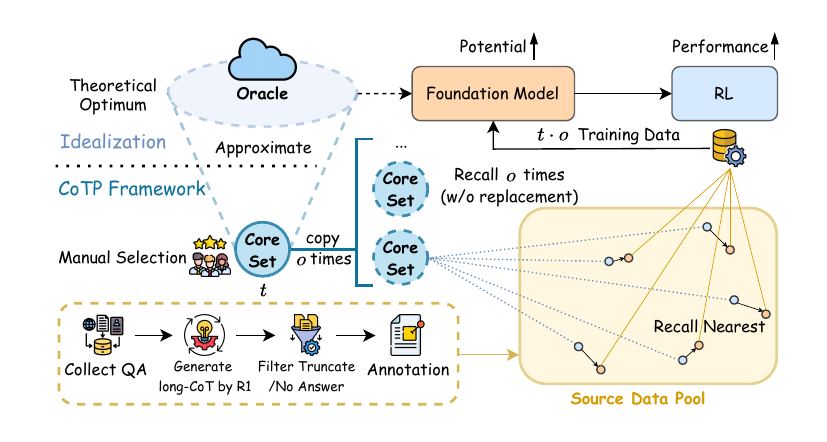}
    \caption{The CoTP framework that selects long-CoT data similar to the core set based on the chains of reasoning patterns and token entropy.}
    \label{fig:method_framework}
\end{subfigure}
\hspace{0.03\linewidth}
\begin{subfigure}[t]{0.27\linewidth}
    \centering
    \includegraphics[width=\linewidth, height=5cm, keepaspectratio]{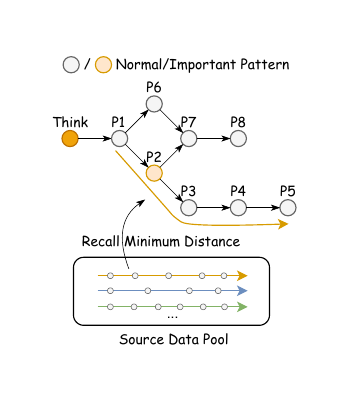}
    \caption{Graph of core set reasoning pattern chains.}
    \label{fig:method_graph}
\hspace{0.04\linewidth}
\end{subfigure}
\caption{
Illustration of the CoTP framework.
The left figure shows the overall process of the CoTP framework, while the right figure shows the graph of reasoning chains. The patterns with higher TF-IDF weights are important, while the remaining patterns are considered normal. The CoTP framework selects the minimum distance chain from the source data pool.}
\label{fig:method}
\end{figure}

\section{
CoTP: High-Value Reasoning Patterns for Efficient Reasoning
\label{sec:method}}
Our research aims to train the foundation model $\mathcal{M}_0$ to learn a wide variety of high-value reasoning patterns during the mid-training stage, thereby expanding the reasoning potential of $\mathcal{M}_0$ and incentivizing RL performance. We begin by theoretically defining and analyzing the reasoning potential of the foundation model. By abstracting atomic reasoning patterns from CoT sequences, we construct a core reference set enriched with valuable reasoning patterns to approximate the oracle reasoning data. As depicted in Figure~\ref{fig:method_framework}, our CoTP framework effectively selects long-CoT data similar to the core set from the source data pool using chains of reasoning patterns and token entropy, aided by a dual-granularity algorithm involving weighted \ac{DTW}. 

\subsection{Problem Formulation}
We theoretically define and analyze the model potential.
Unlike conventional deterministic model evaluation, we assess the model potential by adopting the sampling mode inference multiple times to capture the stochastic nature of model performance. The model potential is then defined as follows:
\begin{definition}[Model Potential]\mbox{}
For a given model $\mathcal{M}$ in sampling decoding mode and a question $q_i$, we define the \textit{model potential} $\Phi(\mathcal{M}, q_i)$ as the probability that the model generates the correct answer for question $q_i$ when sampling from its output distribution:
\begin{equation}
\Phi(\mathcal{M}, q_i) = \mathbb{P}[f_{\mathcal{M}}(q_i) = a_i^*],
\end{equation}
where $f_{\mathcal{M}}(q_i)$ denotes the sampled output and $a_i^*$ is the correct answer.
The \textit{overall model potential} $\Phi(\mathcal{M}, \mathcal{D}_{eval})$ is defined as the expected potential on an evaluation dataset $\mathcal{D}_{eval} = \{(q_i, a_i)\}_{i=1}^{N_{eval}}$:
\begin{equation}
\Phi(\mathcal{M}, \mathcal{D}_{eval}) = \mathbb{E}_{(q,a) \sim \mathcal{D}_{eval}}[\Phi(\mathcal{M}, q)] = \frac{1}{N_{eval}} \sum_{i=1}^{N_{eval}} \Phi(\mathcal{M}, q_i).
\end{equation}
\end{definition}
There is a fundamental relationship between model potential and expected reasoning cost.
\begin{corollary}
\label{cor:potential_attempts}
Let $K_i$ denote the first-passage time for question $q_i$, representing the number of independent attempts required to solve $q_i$. Suppose each attempt is an independent Bernoulli trial with success probability $\Phi(\mathcal{M}, q_i)$, so that $K_i \sim \mathrm{Geom}(\Phi(\mathcal{M}, q_i))$. Then,
\begin{equation}
    \Phi(\mathcal{M}, q_i) = \frac{1}{\mathbb{E}[K_i]}.
\end{equation}
In other words, the model potential is the inverse of the expected first-passage time and a smaller $K_i$ indicates higher model potential.
\end{corollary}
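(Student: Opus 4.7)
The plan is to deduce the identity directly from the elementary first-moment formula for a geometric random variable: if $K \sim \mathrm{Geom}(p)$ with support $\{1, 2, \ldots\}$, then $\mathbb{E}[K] = 1/p$. Writing $p := \Phi(\mathcal{M}, q_i)$ for brevity, the corollary then follows by trivial inversion, so the entire content of the proof is this well-known moment identity applied in the current notation.

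First I would record the PMF implied by the hypothesis, namely $\mathbb{P}[K_i = k] = (1-p)^{k-1} p$ for $k \geq 1$, which is the distribution of the number of independent Bernoulli($p$) trials up to and including the first success. Second, I would expand $\mathbb{E}[K_i] = p \sum_{k \geq 1} k (1-p)^{k-1}$ and evaluate the inner sum via the standard generating-function identity $\sum_{k \geq 1} k x^{k-1} = (1-x)^{-2}$ on $|x|<1$, obtained by termwise differentiation of the geometric series (justified inside the radius of convergence). Substituting $x = 1-p$ collapses the sum to $p^{-2}$, giving $\mathbb{E}[K_i] = 1/p$; rearranging yields $\Phi(\mathcal{M}, q_i) = 1/\mathbb{E}[K_i]$, which is the claim.

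There is essentially no substantive obstacle, as the statement is a textbook consequence of the geometric distribution. The only minor point I would flag is the degenerate case $\Phi(\mathcal{M}, q_i) = 0$, in which $K_i = \infty$ almost surely and $\mathbb{E}[K_i] = \infty$; the identity then holds under the natural convention $1/\infty = 0$, and the paper's implicit assumption that the evaluated questions are at least in principle solvable by $\mathcal{M}$ keeps us in the regime $p \in (0, 1]$ where the computation is unconditional.
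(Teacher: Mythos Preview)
Your proposal is correct and matches the paper's treatment: the paper states the corollary without proof, implicitly relying on the textbook fact that a geometric random variable with success probability $p$ has mean $1/p$. Your derivation via the generating-function identity $\sum_{k\ge 1} k x^{k-1} = (1-x)^{-2}$ is exactly the standard argument one would supply, and your remark on the degenerate case $p=0$ is a nice touch that the paper does not address.
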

To achieve a lower $K$ for any given question, a promising approach is to extensively expose the model to a wide range of effective reasoning paradigms and enable it to internalize them. This insight motivates us to construct an ideal oracle dataset for training LLMs, which consists of samples that exhibit high reasoning efficiency, i.e., those with low-$K$ characteristics.

\textbf{Objective.}
Formally, assume there exists an ideal oracle training dataset $\mathcal{D}^*_{oracle}$ that enables the foundation model to achieve maximal reasoning potential. 
Our goal is to select a training subset $\mathcal{D}^*_{train}$ containing $M$ samples from a given source dataset $\mathcal{D}_{source} = \{(q_j, c_j, a_j, \boldsymbol{\ell}_j)\}_{j=1}^{N}$  to minimize the gap between the reasoning potential of the model trained on $\mathcal{D}^*_{train}$ and that trained on $\mathcal{D}^*_{oracle}$:
\begin{align}
\mathcal{D}^*_{oracle} &= \arg\max_{\mathcal{D}} \left[ \Phi(\mathcal{M}_{\mathcal{D}}, \mathcal{D}_{eval}) - \Phi(\mathcal{M}_0, \mathcal{D}_{eval}) \right] \\
\mathcal{D}^*_{train} &= \arg\min_{\mathcal{D} \subset \mathcal{D}_{source}, |\mathcal{D}| = M} \left|\Phi(\mathcal{M}_{\mathcal{D}}, \mathcal{D}_{eval}) - \Phi(\mathcal{M}_{\mathcal{D}^*_{oracle}}, \mathcal{D}_{eval})\right| ,
\end{align}
where $\mathcal{M}_{\mathcal{D}}$ denotes the model trained on dataset $\mathcal{D}$, $c_j$ and $a_j$ are the CoT and answer for question $q_j$, and $\boldsymbol{\ell}_j$ denotes metadata labels such as subject and difficulty.


\subsection{Core Set Approximates Oracle}
\label{sec:CoreSetApproximatesOracle}

Since it is difficult to determine $\mathcal{D}^*_{oracle}$, we propose approximating it using a carefully constructed reference core set consisting of CoT data rich in diverse high-value reasoning patterns, as shown in Figure \ref{fig:method_graph}. This core set will guide the selection of $\mathcal{D}_{train}$ from $\mathcal{D}_{source}$. 

We propose capturing the reasoning properties of each CoT sample at two granularities: using pattern chains to capture highly abstract reasoning paradigms~\citep{chen2025mechanism}, and using entropy chains to capture token features with high reasoning gain~\citep{wang2025beyond,cui2025entropy}.

\begin{definition}[Reasoning Pattern and Pattern Chain]\mbox{}
A \textit{reasoning pattern} $\rho$ is an atomic cognitive operation representing a fundamental reasoning step applicable across diverse problem domains (see Figure \ref{fig:wordcloud}). And a \textit{pattern chain} $\mathcal{C} = [\rho_1, \rho_2, \ldots, \rho_n]$ is an ordered sequence of reasoning patterns extracted from a CoT sequence (see Figure \ref{fig:0922_hensel_lifting_comparison}).
\end{definition}

We employ Deepseek-V3~\citep{deepseekai2025deepseekv3technicalreport} to annotate the reasoning pattern chain for each CoT sequence (detailed in Appendix~\ref{app:pattern_anno}). The entropy of each token in the CoT sequence is computed as $ h_t = -\sum_{v \in \mathcal{V}} p_t(v) \log p_t(v) $, where $p_t(v)$ denotes the probability of token $v$ at position $t$. Consequently, the entropy chain is represented as $ \mathcal{H} = [h_1, h_2, \ldots, h_T] $. Illustrative examples of annotated reasoning pattern chains and entropy chains for CoT data are provided in Appendix \ref{case_study:Problem_Reasoning_Pattern_Chain}.

Formally, we define extraction functions $\xi: \mathcal{C} \rightarrow \mathcal{P}^*$ and $\eta: \mathcal{C} \rightarrow \mathbb{R}^T$, mapping each CoT sequence $c$ to its pattern chain $\xi (c)$ and entropy chain $\eta (c)$.
(1) To construct the core set, we first filter questions from the source dataset $\mathcal{D}_{source}$, annotated with difficulty levels and problem types. Questions are randomly sampled to match the expected distribution. We then employ multiple strong reasoning models to independently generate answers, using majority voting to determine the correct ones, and filter out low-quality data, such as unsolvable questions, yielding $\mathcal{Q} = \{q_1, \ldots, q_{|\mathcal{Q}|}\}$. 
(2) For each $q_i \in \mathcal{Q}$, we employ a strong reasoning model to generate $r$ CoT sequences denoted as $\{c_{i,j}\}_{j=1}^r$. We extract the pattern chains $\xi(c_{i,j})$ and assess the importance of each pattern $\rho_k$ for the question $q_i$ using the TF-IDF weighting scheme: 
\begin{equation}
\omega(\rho_k \mid q_i, \mathcal{Q}) = \text{TF}(\rho_k, q_i) \times \text{IDF}(\rho_k, \mathcal{Q}),
\end{equation}
where the calculation details for TF and IDF are provided in Appendix \ref{tf-idf}.
(3) From the remaining CoT sequences with correct answers, according to the importance scores of different patterns, we manually select those exhibiting distinctive, high-importance patterns to construct the core set $\mathcal{C}_{core} = \{(q_i, c_i, a_i, \boldsymbol{\ell}_i)\}_{i=1}^t$ (see Figure \ref{fig:method_graph}), where each instance is accompanied by its pattern importance weights $\Omega_i = \{\omega(\rho_k \mid q_i, \mathcal{Q})\}$ for $\rho_k \in \xi(c_i)$.


\subsection{Select Training Data with High-Value Reasoning Patterns}
Given $\mathcal{C}_{core}=\{(q_i^{c},c_i^{c},a_i^{c},\boldsymbol{\ell}_i^{c})\}_{i=1}^{t}$, we construct a training dataset $\mathcal{D}_{train}$ of size $T$ by selecting instances with similar CoT sequences from the source dataset $\mathcal{D}_{source}=\{(q_j^{s},c_j^{s},a_j^{s},\boldsymbol{\ell}_j^{s})\}_{j=1}^{N}$. For analytical convenience, we set $T=t\cdot o$, where $T < N$, ensuring each core instance is associated with $o$ source instances. We formulate this as an assignment problem with capacity constraints. Let $\mathbf{D}\in\mathbb{R}^{t\times N}$, where $D_{ij}$ denotes the distance between CoT sequences $c_i^{c}$ and $c_j^{s}$. We seek to optimize a binary matrix $\mathbf{S}\in\{0,1\}^{t\times N}$, where $S_{ij}=1$ signifies the assignment of source instance $j$ to core instance $i$. The objective is given by:
\begin{equation}
\begin{aligned}
\min_{\mathbf{S}} \quad& \sum_{i=1}^{t}\sum_{j=1}^{N} D_{ij}\,S_{ij} \\
\text{s.t.} \quad &\sum_{j=1}^{N} S_{ij}=o, \forall i\in[1,t];\:\:\sum_{i=1}^{t} S_{ij}\le 1, \forall j\in[1,N];\:\: S_{ij}\in\{0,1\}
\label{eq:assignment-problem}
\end{aligned}
\end{equation}
The distance $D_{ij}$ is the weighted sum of pattern chain and entropy chain distances, with $\lambda\in[0,1]$:
\begin{equation}
D_{ij}=\lambda\, d_{\text{pattern}}(\xi(c_i^{c}),\xi(c_j^{s}))+(1-\lambda)\, d_{\text{entropy}}(\eta(c_i^{c}),\eta(c_j^{s})).
\end{equation}
For distance computation, we employ \ac{DTW} for both dimensions:
\begin{equation}
d(x,y) = \text{WeightedDTW}(x, y, w, \delta).
\end{equation}
We set the parameters as follows: for the pattern chain distance, $x = \xi(c_i^{c})$, $y = \xi(c_j^{s})$, $w = \Omega_i$, and $\delta = d_{\text{ngram}}$ (see Figure~\ref{fig:0922_hensel_lifting_comparison}); for the entropy chain distance, $x = \eta(c_i^{c})$, $y = \eta(c_j^{s})$, $w = \mathbf{1}$, and $\delta = d_{\text{abs}}$. The distance computation and WeightedDTW are detailed in Algorithms~\ref{alg:ngram-cosine} and ~\ref{alg:weighted-dtw}.



To efficiently solve this optimization problem, we reformulate it as a standard linear assignment problem by replicating each core instance $o$ times, resulting in an expanded cost matrix of size $t \cdot o \times N$ with entries $D_{ij}$. The optimal assignment is then obtained using the Hungarian algorithm \citep{Kuhn1955Hungarian,mills2007dynamic}. This transformation guarantees optimality, as each source instance is assigned to at most one core, and the replication ensures that each core receives exactly $o$ assignments (see Appendix~\ref{appendix:equivalence-proof} for proof). 
The overall data selection procedure of our CoTP framework is summarized in Algorithm~\ref{alg:bucket-hungarian-selection}. 
Note that it is domain-agnostic and theoretically applicable to any scenario decomposable into atomic reasoning patterns (see Appendix~\ref{app:cross-domain-patterns} for pattern visualization in STEM domains).


\begin{algorithm}
\caption{Hungarian Data Selection Algorithm}
\label{alg:bucket-hungarian-selection}
\textbf{Input:} Source dataset $\mathcal{D}_{pool}$; Core Set $\mathcal{C}_{core}$; Weight parameter $\lambda$ \\
\textbf{Output:} Final selected dataset $\mathcal{D}_{select}$
\begin{algorithmic}[1]
\STATE Initialize $\mathcal{D}_{select} \leftarrow \emptyset$, $\mathbf{D} \in \mathbb{R}^{|\mathcal{D}_{pool}| \times |\mathcal{C}_{core}|}$ 
\STATE \textbf{for} each $(q_j^{s},c_j^{s},a_j^{s},\boldsymbol{\ell}_j^{s}) \in \mathcal{D}_{pool}$, $i = 1$ \TO $m$ \textbf{do}
    \STATE \quad $d_{\text{pattern}} \leftarrow \text{WeightedDTW}(\xi(c_j^{s}), \xi(c_i^{c}), \Omega_i, d_{\text{ngram}})$  // refer to Algorithm~\ref{alg:weighted-dtw} and \ref{alg:ngram-cosine}
    \STATE \quad $d_{\text{entropy}} \leftarrow \text{WeightedDTW}(\eta(c_j^{s}), \eta(c_i^{c}), \mathbf{1}, d_{\text{abs}})$   // refer to Algorithm~\ref{alg:weighted-dtw}
    \STATE \quad $\mathbf{D}[j, i] \leftarrow \lambda \cdot d_{\text{pattern}} + (1-\lambda) \cdot d_{\text{entropy}}$
\STATE  $\text{assignment}_j \leftarrow \text{Hungarian}(\mathbf{D}_j)$  // Input: cost matrix; Output: optimal assignment
    \STATE $\mathcal{D}_{select} \leftarrow \{(q_s^{s},c_s^{s},a_s^{s},\boldsymbol{\ell}_s^{s}) : s \in \text{selected indices from assignment}_j\}$
\RETURN $\mathcal{D}_{select}$
\end{algorithmic}
\end{algorithm}

\section{Experiments\label{sec:exp}}

\subsection{Experimental Setup}
\paragraph{Data Construction.}
To construct a high-quality reasoning data pool, we integrate diverse mathematical QA datasets as follows and conduct rigorous n-gram deduplication: (1) OpenR1-Math-220k~\citep{openr1} (OpenR1-Math) comprises 220k math problems, each expanded with two to four reasoning traces generated by DeepSeek-R1~\citep{guo2025deepseek}; (2) AM-DeepSeek-R1-Distilled~\citep{zhao202514millionopensourcedistilled} (AM-Distilled) focuses on general reasoning tasks, with detailed thinking traces; (3) BoostQA~\citep{zhang2025largescalediversesynthesismidtraining} consists of large-scale QA pairs of different difficulty levels. We specifically select high-difficulty H4/H5-level questions from BoostQA and employ DeepSeek-R1 to generate long reasoning CoT, maintaining consistency with OpenR1-Math and AM-Distilled, with a maximum output length set to 32k tokens. In industrial production, the use of multiple reasoning models is encouraged to enrich the variety of reasoning patterns.
%
We construct the LongCoTPool (see Figure~\ref{fig:method_framework}) by excluding truncated or unanswered data, annotating chains of reasoning patterns, and ensuring no overlap with the core set. 
Following~\citet{shao2024deepseekmathpushinglimitsmathematical}, we employ exact 10-gram matching and embedding-based similarity filtering to mitigate contamination of questions and answers originating from benchmarks.


\paragraph{Training Details.}
We conduct mid-training experiments using the 85A6B MoE~\citep{jiang2024mixtral} foundation model, which is pre-trained on 14T-token corpora.
During the mid-training stage, the model decays on a mixture of 30B-token specialized experimental reasoning data and general-domain data, KnowEdu~\citep{zhang2025largescalediversesynthesismidtraining} at a 1:2 ratio. The reasoning data follows a structured format of \textit{\{question\}\textbackslash n\{cot\_answer\}} with final answers encapsulated in \textit{\textbackslash boxed\{\}}. In scaling experiments, the data volume is expanded to 60B tokens while maintaining the same data blend ratio. 
We further conduct SFT using the same collected dataset of 60k long-CoT entries to enhance the models' capability to generate long-CoT sequences. 
\textit{This step is crucial for ensuring fair comparisons across models, as it prevents underestimation of models that initially lack the ability to produce long-CoT outputs} (see analysis in Appendix \ref{app:PassAtK_Curves}). Moreover, the SFT stage facilitates smooth transitions to RL by establishing a robust foundation in CoT reasoning for effective rollouts. Importantly, we apply the same SFT data across all models to eliminate any variances and uphold consistency in evaluation conditions. 
Furthermore, we conduct experiments to verify that the expanded reasoning potential of the foundation model can enhance downstream RL performance. Specifically, we apply the same experimental settings to foundation models trained on different datasets to evaluate their subsequent RL performance, where the RL algorithm adopts GSPO~\citep{zheng2025groupsequencepolicyoptimization}.
The detailed setup is provided in Appendix~\ref{app:exp_train}.

\paragraph{Evaluation.}
We conduct a comprehensive evaluation of mid-trained models to assess their capability to sustain general performance after exposure to reasoning-intensive data, including benchmarks such as MMLU~\citep{hendrycks2021measuringMMLU}, CMMLU~\citep{li-etal-2024-cmmlu}, C-Eval~\citep{huang2023ceval}, WinoGrande~\citep{sakaguchi2021winogrande}, HellaSwag~\citep{zellers-etal-2019-hellaswag}, ARC-C~\citep{clark2018think}, BIG-Bench~\citep{suzgun-etal-2023-challenging} and DROP~\citep{dua-etal-2019-drop}. Furthermore, we conduct assessments of the SFT models on downstream challenging mathematical reasoning tasks to examine the reasoning potential of the foundation models. The benchmarks include AIME 2025 \& 2024~\citep{aime}, HMMT 2025~\citep{hmmt}, BeyondAIME~\citep{bytedance_seed_2025_beyondaime}, and MATH500~\citep{hendrycks2021measuringMATH}, each repeated multiple times for statistical robustness (detailed in Appendix~\ref{app:eval}). 
Pass@k~\citep{chen2021evaluating} curves are drawn to provide a detailed visualization of model performance dynamics with increasing attempts $k$. For the RL stage, the same benchmarks are employed to evaluate the enhancements in reasoning capabilities.

\paragraph{Baselines.}
Baselines can be divided into two categories, which differ in mid-training data with the same other settings. 
The first paradigm evaluates the 30B-token general corpus KnowEdu~\citep{zhang2025largescalediversesynthesismidtraining}, a high-quality knowledge-rich educational dataset. 
The second paradigm assesses the QA blend following the same 1:2 blend ratio between QA data and KnowEdu. BoostQA~\citep{zhang2025largescalediversesynthesismidtraining} contains QA data without CoT sequences. 
The short-CoT QA datasets include JiuZhang3.0~\citep{zhou2024jiuzhang}, MegaMathQA, a QA subset from MegaMath-Synthetic~\citep{zhou2025megamath}, and OpenMathInstruct-2~\citep{toshniwal2024openmathinstruct} (OMInstruct-2). The long-CoT QA datasets include OpenR1-Math, AM-Distilled, and our curated data pool, LongCoTPool. (detailed in Table~\ref{tab:app_exp_data})

\subsection{Main Results\label{sec:main_results}}

\begin{table}[t]
    \caption{General performance and average pass@1 accuracy (\%) of models. The best and second-best are in bold and underlined, respectively. Abbreviations: Beyond = BeyondAIME.}
    \label{tab:main_sft}
    \begin{center}
    \small
    \setlength{\tabcolsep}{1mm}
    \begin{tabular}{l|c|cccccc}
        \toprule
        \textbf{Dataset} & \textbf{General} & \textbf{AIME2025} & \textbf{AIME2024} & \textbf{HMMT2025} & \textbf{Beyond} & \textbf{MATH500} & \textbf{AVG.}\\
        \midrule
        KnowEdu & 64.39 & 0.33$\pm$0.35 & 1.22$\pm$0.68 & 5.10$\pm$1.40 & 0.00$\pm$0.00 & 45.80$\pm$4.37 & 10.49$\pm$1.36 \\
        BoostQA & 63.29 & 0.52$\pm$0.46 & 1.46$\pm$0.76 & 4.06$\pm$1.25 & 0.40$\pm$0.39 & 54.00$\pm$4.37 & 12.09$\pm$1.45 \\
        JiuZhang3.0 & 64.32 & 0.83$\pm$0.57 & 1.25$\pm$0.71 & 2.92$\pm$1.06 & 0.10$\pm$0.20 & 56.00$\pm$4.36 & 12.22$\pm$1.38 \\
        MegaMathQA & 64.79 & 0.21$\pm$0.29 & 2.19$\pm$0.93 & 4.17$\pm$1.25 & 0.40$\pm$0.39 & 51.80$\pm$4.38 & 11.75$\pm$1.45 \\
        OMInstruct-2 & 66.24 & 3.44$\pm$1.15 & 8.02$\pm$1.73 & 5.42$\pm$1.44 & 1.20$\pm$0.67 & 72.20$\pm$3.93 & 18.06$\pm$1.78 \\
        OpenR1-Math & 66.58 & \underline{23.96}$\pm$2.74 & \underline{29.69}$\pm$2.92 & 16.04$\pm$2.34 & \underline{9.10}$\pm$1.79 & \underline{87.80}$\pm$2.87 & \underline{33.32}$\pm$2.53 \\
        AM-Distilled & 67.97 & 23.12$\pm$2.70 & 25.52$\pm$2.79 & \underline{18.02}$\pm$2.46 & 8.30$\pm$1.72 & 87.20$\pm$2.93 & 32.43$\pm$2.52 \\
        \midrule
        LongCoTPool & 65.95 & 21.89$\pm$2.46 & 24.90$\pm$2.85 & 15.63$\pm$2.31 & 7.90$\pm$1.72 & 85.40$\pm$3.10 & 31.14$\pm$2.49 \\
        CoTP (Ours) & 66.08 & \textbf{28.02}$\pm$2.88 & \textbf{37.92}$\pm$3.09 & \textbf{20.73}$\pm$2.58 & \textbf{10.20}$\pm$1.88 & \textbf{90.80}$\pm$2.54 & \textbf{37.53}$\pm$2.59 \\
        \bottomrule
    \end{tabular}
    \end{center}
\end{table}

\paragraph{Only 10B high-value reasoning data selected by CoTP can significantly improve multiple challenging mathematical reasoning tasks by an average of 6.39\%, reaching SOTA results and substantially raising the upper bound of RL performance.} 
As depicted in Table~\ref{tab:main_sft}, CoTP not only sustains performance across general benchmarks, but also surpasses baselines on multiple challenging mathematical reasoning benchmarks, with average gains of 4.21\% compared to OpenR1-Math and 5.10\% compared to AM-Distilled. 
More detailed discussions on general performance are shown in Appendix~\ref{app:general}.
It offers an average improvement of 6.39\% over LongCoTPool, with a remarkable \textbf{9.58\%} enhancement on AIME 2025 \& 2024. 
Furthermore, LongCoTPool, as a mixed data pool, shows a slight decline in average performance relative to OpenR1-Math and AM-Distilled, indicating that a simple mixing of reasoning data might be insufficient to expand the model's reasoning potential. 
In contrast, CoTP, curated from LongCoTPool, achieves optimal performance, underscoring the importance of selecting high-value reasoning data and the advantages of our CoTP framework.
\begin{wrapfigure}{l}{0.56\textwidth}
    \centering
    \includegraphics[width=0.52\textwidth]{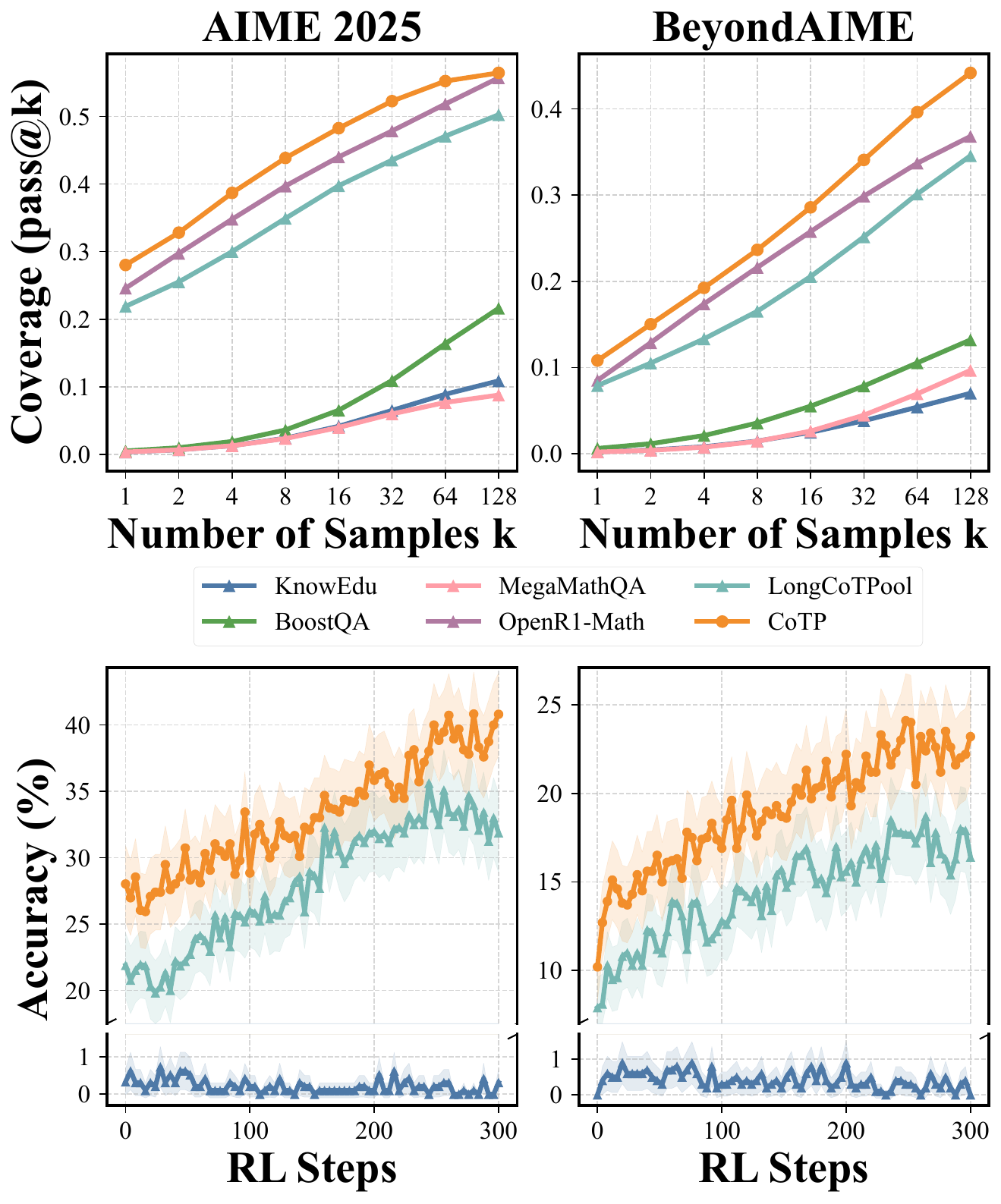}
    \caption{The comparison of pass@k and RL performance across different datasets.}
    \label{fig:main_merge}
\end{wrapfigure}
As depicted in Figure~\ref{fig:main_merge}, in the pass@k evaluation, models mid-trained on CoT datasets, particularly CoTP, show consistent and progressive improvement as the number of attempts $k$ increases, along with superior performance as the RL training steps increase, consistent with findings from OctoThinker~\citep{wang2025octothinkermidtrainingincentivizesreinforcement}. 
This growth highlights \textit{the expanded model potential during the mid-training stage (see Table~\ref{tab:main_all_cmp}), which can substantially raise the upper bound of RL performance.} 
This demonstrates that the improvements introduced during mid-training can be effectively carried over to RL, rather than being prematurely acquired during mid-training and thereby diminishing the distinct benefits typically observed in RL, which would otherwise result in no significant difference in final model performance. 
Our CoTP dataset stands out by exhibiting superior mathematical reasoning capabilities, with an average improvement of \textbf{7.81\%} over LongCoTPool and 42.04\% over KnowEdu. 
This highlights its efficacy for expanding challenging mathematical reasoning potential, thereby incentivizing RL performance.

\begin{table}[t]
    \caption{Comparison of the performance of models mid-trained on different datasets.}
    \label{tab:main_all_cmp}
    \begin{center}
    \small
    \setlength{\tabcolsep}{1mm}
    \begin{tabular}{l|cc|cc|cc|cc|cc|cc}
        \toprule
        \multirow{2}{*}[-0.5ex]{\bf Dataset} & \multicolumn{2}{c|}{\bf AIME 2025} & \multicolumn{2}{c|}{\bf AIME 2024} & \multicolumn{2}{c|}{\bf HMMT 2025} & \multicolumn{2}{c|}{\bf BeyondAIME} & \multicolumn{2}{c|}{\bf MATH500} & \multicolumn{2}{c}{\bf AVG.} \\
        \cmidrule{2-13}
        & \bf SFT & \bf RL & \bf SFT & \bf RL & \bf SFT & \bf RL & \bf SFT & \bf RL & \bf SFT & \bf RL & \bf SFT & \bf RL \\
        \midrule
        KnowEdu & 0.33 & 0.31 & 1.22 & 0.83 & 5.10 & 1.25 & 0.00 & 0.00 & 45.80 & 44.60 & 10.49 & 9.40 \\
        LongCoTPool & 21.89 & 31.88 & 24.90 & 44.38 & 15.63 & 35.31 & 7.90 & 16.40 & 85.40 & 90.20 & 31.14 & 43.63 \\
        CoTP (Ours) & 28.02 & 40.81 & 37.92 & 58.65 & 20.73 & 41.35 & 10.20 & 23.20 & 90.80 & 93.20 & 37.53 & 51.44 \\
        \bottomrule
    \end{tabular}
    \end{center}
\end{table}

\paragraph{CoTP exhibits exceptional scalability in challenging mathematical reasoning tasks.} 
As illustrated in Figure~\ref{fig:main_scale} (detailed in Appendix~\ref{app:scale}), 
when scaled to 60B tokens—with additional data incorporated by relaxing the similarity threshold—the model sustains its upward performance trajectory, achieving a 4.72\% average improvement on AIME 2025 \& 2024 compared to the results at 30B tokens.
This suggests that CoTP effectively expands the model's potential in challenging mathematical reasoning tasks even at larger data volumes while maintaining general performance, demonstrating its continuous effectiveness in recalling high-value reasoning data.

\begin{figure}[t]
    \centering
    \includegraphics[width=\textwidth]{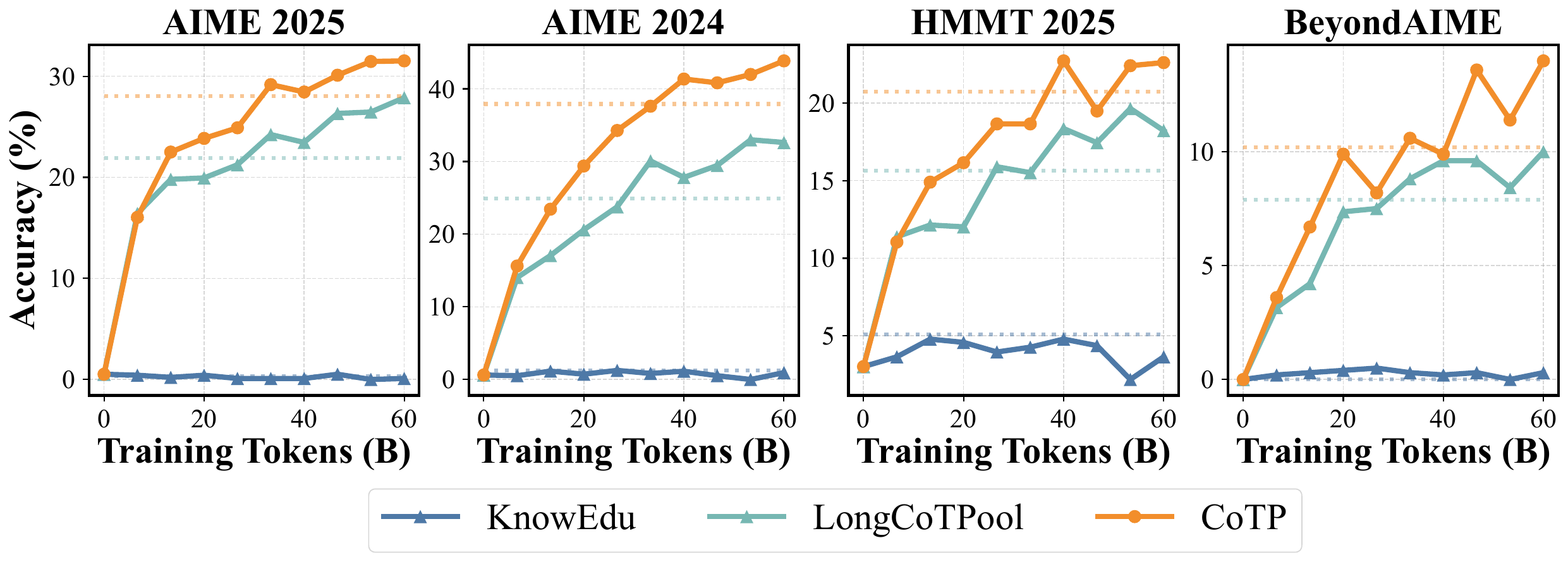}
    \caption{Scalability of data volume examining the SFT performance of models mid-trained on datasets of varying volumes. The dashed lines represent the performance of each dataset configured under the 30B-token setting.}
    \label{fig:main_scale}
\end{figure}

\subsection{Ablations\label{sec:ablations}}
\label{Ablations}

We conduct ablation studies to analyze the impact of various components in the CoTP framework. For efficiency and rapid validation, these ablation experiments are performed on the 12B-token QA blend, maintaining the same 1:2 ratio between QA data and KnowEdu and and using the same experimental settings as in the main experiments. The results are shown in Table~\ref{tab:ablation}.

\begin{table}[ht]
    \caption{Ablation results. CoTP uses \textit{n=1 or 2} for n-gram setting, $\lambda$=0.8 for entropy. And w/o entropy denotes $\lambda$=1.}
    \label{tab:ablation}
    \begin{center}
    \small
    \setlength{\tabcolsep}{1mm}
    \begin{tabular}{l|cccccccc}
        \toprule
        \textbf{Dataset} & \textbf{AIME 2025} & \textbf{AIME 2024} & \textbf{HMMT 2025} & \textbf{BeyondAIME} & \textbf{MATH500} & \textbf{AVG.} \\
        \midrule
        \textbf{CoTP} & 21.84$\pm$2.68 & 25.76$\pm$2.70 & 13.90$\pm$2.27 & 6.70$\pm$1.55 & 85.20$\pm$3.25 & 30.68$\pm$2.49 \\
        \midrule
        \textbf{w/o entropy} & 21.46$\pm$2.63 & 25.63$\pm$2.78 & 11.88$\pm$2.06 & 6.30$\pm$1.51 & 84.40$\pm$3.18 & 29.93$\pm$2.43 \\
        \midrule
        \textbf{n-gram n=2} & 18.33$\pm$2.48 & 25.42$\pm$2.77 & 11.56$\pm$2.04 & 6.50$\pm$1.53 & 83.80$\pm$3.23 & 29.12$\pm$2.41 \\
        \midrule
        \textbf{w/o importance} & 19.69$\pm$2.54 & 24.17$\pm$2.72 & 13.65$\pm$2.19 & 6.60$\pm$1.54 & 84.20$\pm$3.20 & 29.66$\pm$2.44 \\
        \bottomrule
    \end{tabular}
    \end{center}
\end{table}

\paragraph{Entropy-based selection enhances the quality of reasoning data accessed.} 
By employing a reference model to perform an offline evaluation of the information content of each token within the CoT reasoning data~\citep{wang2025beyond}, entropy-based selection enables a more fine-grained capture of token-level reasoning paradigms.  
The utility of this approach is further illustrated by the entropy visualization in Appendix~\ref{app:case_entropy}, which provides deeper insights into the structural nuances captured. This enhanced quality correlates with improved model performance, underscoring the pivotal role of entropy-based selection in fostering advanced reasoning capabilities.

\paragraph{For n-gram pattern similarity, \textit{n=1 or 2} yields superior outcomes, providing a more balanced evaluation of pattern similarity.} 
This configuration integrates the broader contextual encapsulation of n=1 with the detailed specificity of n=2, offering a comprehensive representation of pattern alignments.  
Notably, unlike English, each character in Chinese carries intrinsic semantic meaning and we demonstrate that Chinese pattern representation exhibits clear distinctions between unrelated pattern pairs under the character-level n-gram method (detailed in Appendix~\ref{app:pat_represent}), which makes it more suitable for calculating distances between pattern entries in Algorithm~\ref{alg:ngram-cosine}.
Consequently, we employ Chinese for annotating reasoning pattern chains.

\paragraph{Importance scores play a critical role in the CoTP framework.} The model lacking importance weighting shows poorer performance, underscoring the significance of pattern importance scores. It highlights the distinction of normal and important reasoning patterns, due to their different contribution to reasoning potential.

\section{Analysis\label{sec:analysis}}
\begin{figure}[t]
    \centering
    \includegraphics[width=\textwidth]{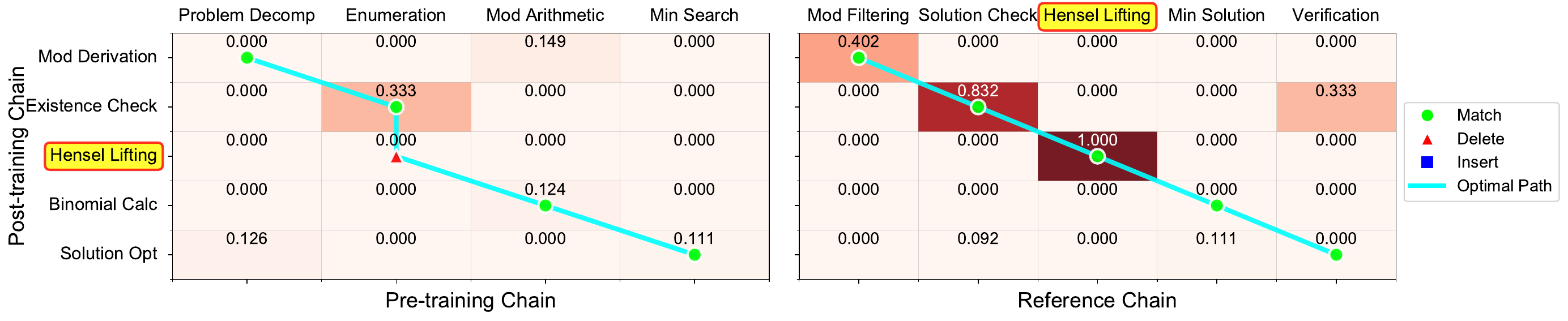}
    \caption{
    Illustration of DTW alignment analysis on the pattern chain similarity matrix.
    }
    \label{fig:0922_hensel_lifting_comparison}
\end{figure}

\paragraph{Our CoTP-trained model demonstrates an enhanced mastery over a broader array of correct and key reasoning patterns.} To validate whether CoTP enables models to learn key reasoning patterns closer to the ideal ones, we conduct a comparative analysis of reasoning patterns exhibited by models trained on various datasets, taking AIME 2025 \& 2024 as examples. Specifically, we consider the model trained on the oracle set as the ideal model, whose generated reasoning traces encompass comprehensive patterns. Our core set serves as an approximation of this oracle. Through the data selection algorithm, we select training data that aligns with the core set distribution. In this analysis, we use the strong reasoning model DeepSeek-R1 as a proxy for the ideal model.
Correct patterns are defined as those previously appearing in correct reasoning pattern chains. 
\begin{wraptable}{r}{0.5\textwidth}
    \caption{A comparative analysis of reasoning patterns on AIME 2025 \& 2024.}
    \label{tab:pattern_cmp}
    \small
    \begin{center}
    \setlength{\tabcolsep}{1mm}
    \begin{tabular}{l|ccc}
    \toprule
    \bf Dataset  & \bf Patterns & \bf Correct & \bf Key \\
    \midrule
    KnowEdu & 4296 & 147 & 116 (1.79\%) \\
    LongCoTPool & 3407 & 2246 & 1345 (20.73\%) \\
    \midrule
    CoTP & 3970 & 3226 & 1815 (27.98\%) \\
    \bottomrule
    \end{tabular}
    \end{center}
\end{wraptable}
As illustrated in Table~\ref{tab:pattern_cmp} (detailed in Appendix~\ref{app:analysis_pattern_cmp}), CoTP not only increases the number of correct patterns it utilizes but also shows an enhanced overlap with the key patterns mastered by DeepSeek-R1, with an increase of 7.25\% compared to LongCoTPool. 
This advancement suggests that our training methodology effectively improves the model's capability to solve challenging reasoning tasks by systematically integrating and aligning with the reasoning paradigms established by high-performance models like DeepSeek-R1.

\paragraph{Our CoTP-trained model shows a notable convergence to the reasoning paradigms of the reference model, evidenced by a reduction in pattern chain distances.} By conducting a comprehensive pattern distance analysis of models before and after mid-training against DeepSeek-R1 on the core set, we quantify these alignment enhancements. As depicted in Figure~\ref{fig:0921_distance_distribution_comparison}, 
we employ the DTW algorithm (see Figure~\ref{fig:0922_hensel_lifting_comparison}) to calculate the reasoning pattern chain distances between the CoT responses from both the model instances before and after midtraining and those from DeepSeek-R1 within each question, then compute the Wasserstein distance between these cross-model distance distributions for each question, and finally average the results across all questions.
The average distance between the model before mid-training and DeepSeek-R1 is 0.51, whereas our CoTP-trained model achieves a reduced distance of 0.35, marking a 31.4\% improvement. Notably, CoTP acquires sophisticated reasoning patterns (see Figure~\ref{fig:high_importance_colorful_wordcloud}). These advanced reasoning patterns reflect expert-level problem-solving approaches and underscore that CoTP effectively facilitates the acquisition of sophisticated reasoning capabilities critical for solving challenging mathematical problems.



\paragraph{The composition of CoTP closely matches the distribution of the core set.} As shown in Figure~\ref{fig:0921_subject_distribution_comparison}, the KL divergence between the problem type distributions relative to the core set decreases from 0.18 (LongCoTPool) to 0.04 (CoTP), demonstrating improved alignment. Furthermore, as illustrated in Figure~\ref{fig:0922_token_length_distribution_comparison}, the token length distribution of CoTP aligns more closely with the core set, which generally features longer token lengths, than with LongCoTPool. This alignment highlights the effectiveness of our CoTP framework in targeting the core set distribution.


\begin{figure}[t]
\begin{subfigure}[t]{0.48\linewidth}
    \centering
    \includegraphics[width=\textwidth]{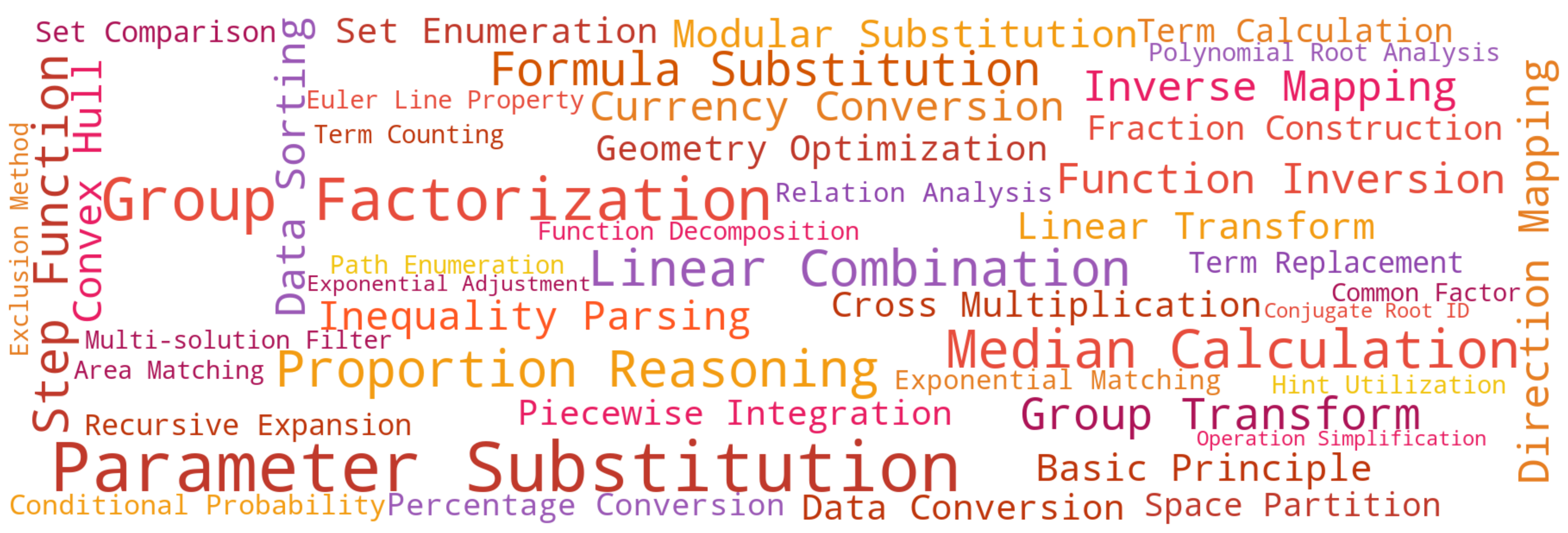}
    \caption{Reasoning patterns with high importance scores.}
    \label{fig:high_importance_colorful_wordcloud}
\end{subfigure}
\hfill
\begin{subfigure}[t]{0.48\linewidth}
    \centering
    \includegraphics[width=\textwidth]{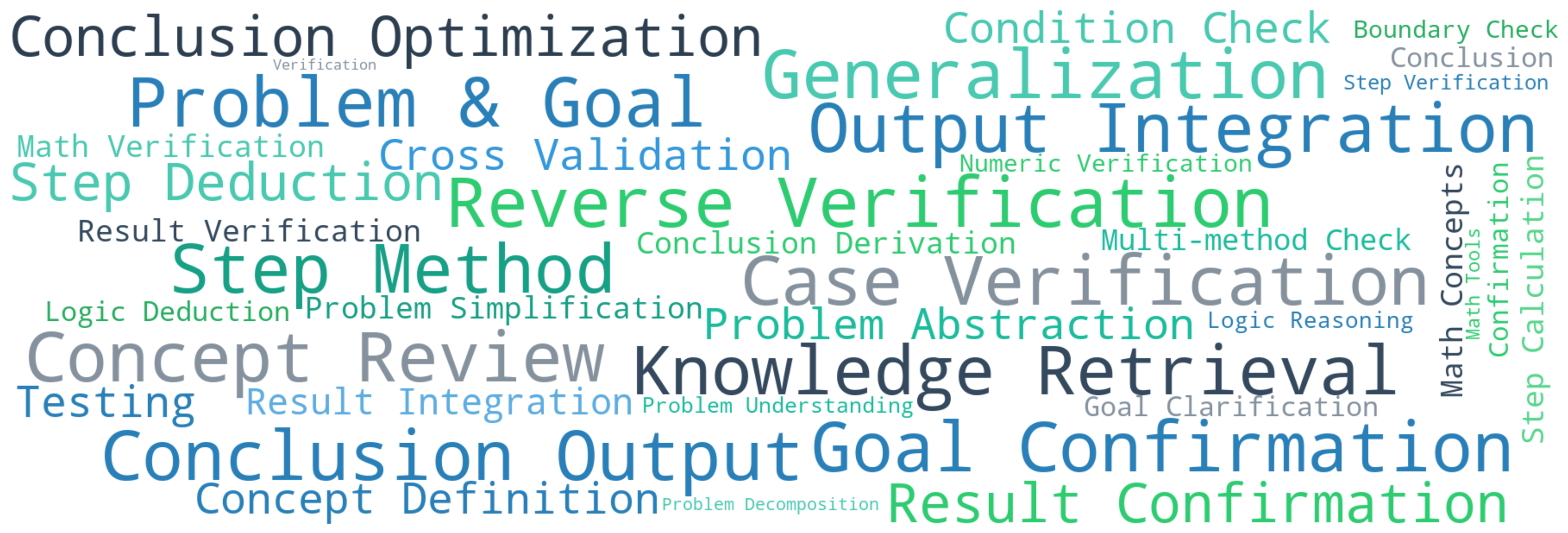}
    \caption{Reasoning patterns with low importance scores.}
    \label{fig:low_importance_colorful_wordcloud}
\end{subfigure}
\caption{Examples of reasoning patterns with different levels of importance.}
\label{fig:wordcloud}
\end{figure}

\begin{figure}[t]
\begin{subfigure}[t]{0.325\linewidth}
    \centering
    \includegraphics[width=\textwidth]{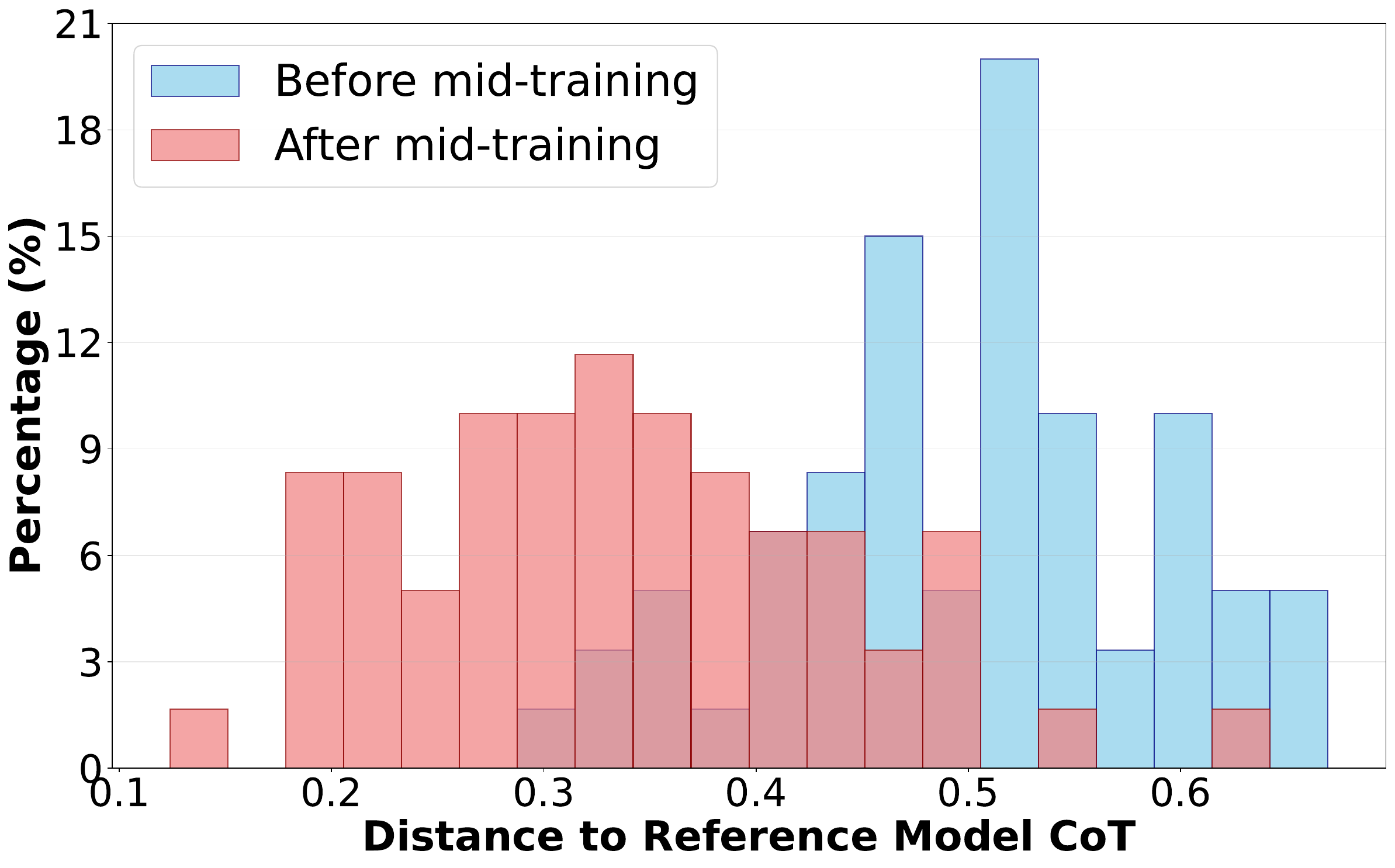}
    \caption{Reasoning chain distance distributions relative to DeepSeek-R1.}
    \label{fig:0921_distance_distribution_comparison}
\end{subfigure}
\hfill
\begin{subfigure}[t]{0.325\linewidth}
    \centering
    \includegraphics[width=\textwidth]{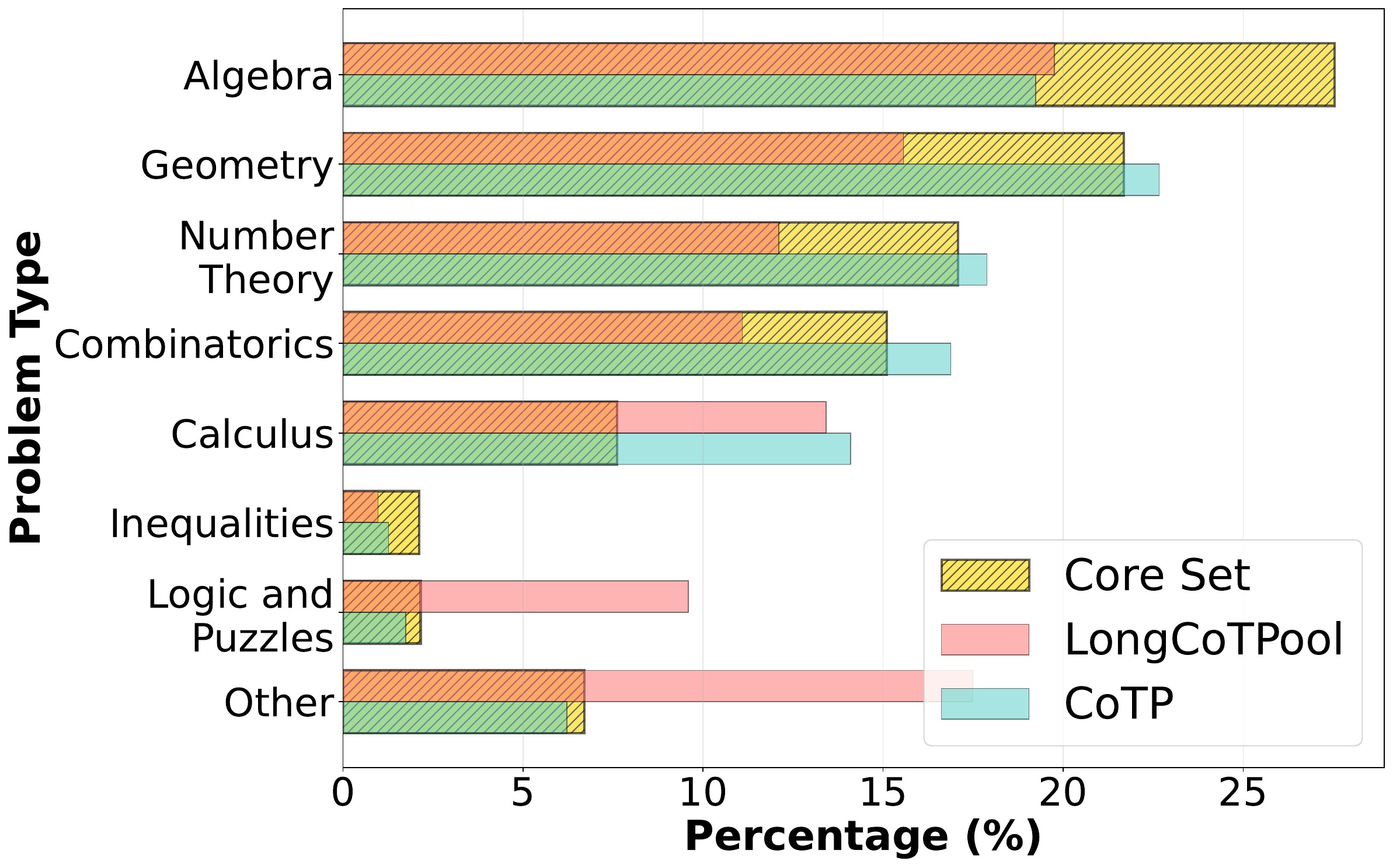}
    \caption{Distribution of problem types in different datasets.}
    \label{fig:0921_subject_distribution_comparison}
\end{subfigure}
\hfill
\begin{subfigure}[t]{0.325\linewidth}
    \centering
    \includegraphics[width=\textwidth]{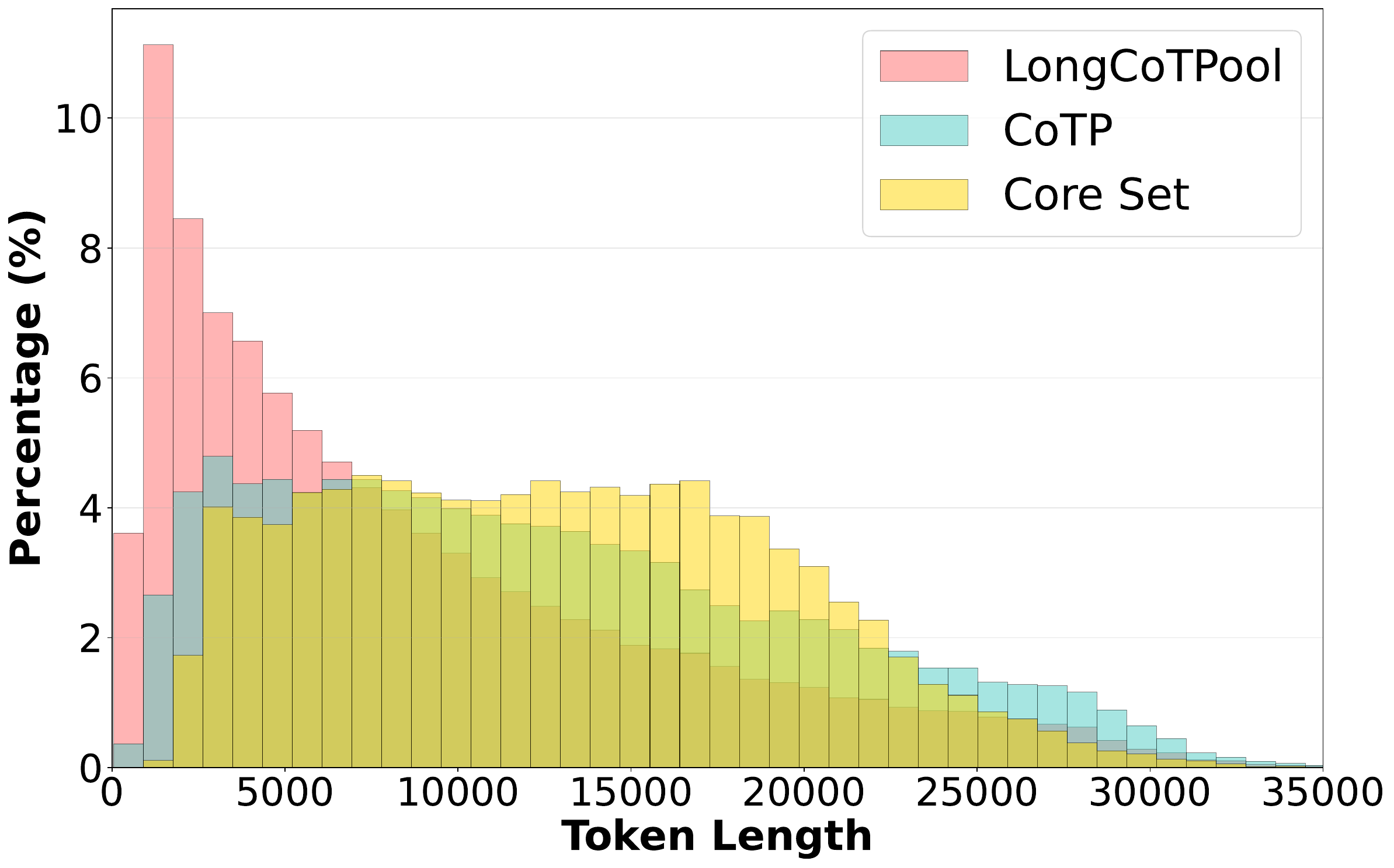}
    \caption{Distribution of token lengths in different datasets.}
    \label{fig:0922_token_length_distribution_comparison}
\end{subfigure}
\caption{Distribution analysis.}
\end{figure}

\paragraph{Effective reasoning patterns not only guide the steps of chain-of-thought, but also facilitate deeper introspective processes.}
We investigate the relationship between reasoning patterns and reflection and examples in Appendix~\ref{app:reflection} illustrate how reflection~\citep{shah2025rethinking} can be seamlessly integrated into established reasoning pattern chains, showcasing the multifaceted characteristics of introspective processes. Our detailed examination reveals that pattern chains inherently encompass reflective characteristics.
Reflection may be encapsulated within discrete patterns, providing precise cognitive insights, or may appear as recursive loops in pattern chains, indicating iterative thought refinement. This analysis to investigating reasoning and reflection indicates their intertwined nature.

\section{Related Work}

Endowing LLMs with reasoning capabilities remains challenging, particularly in the realm of complex mathematical reasoning. Recent advancements in RL algorithm optimization strategies have significantly enhanced the performance of LLMs on downstream challenging mathematical reasoning tasks~\citep{chen2025mechanism,yue2025doesRL,liu2025oatzero}. These improvements are largely driven by exploration-feedback mechanisms that systematically elevate the problem-solving skills of these models. Additionally, they have unveiled the inherent constraints on RL performance imposed by the reasoning capability reserves of the foundation models. Studies investigating the interaction between the foundation models and RL performance~\citep{gandhi2025cognitivebehaviorsenableselfimproving,liu2025understanding,wang2025octothinkermidtrainingincentivizesreinforcement}, such as those related to cognitive behaviors that facilitate self-improvement and insights from training paradigms like R1-Zero, have demonstrated that different foundation models exhibit diverse initial reasoning behaviors. These behaviors critically influence their ability for self-improvement, thereby manifesting varied RL scaling characteristics.

Among these studies, OctoThinker~\citep{wang2025octothinkermidtrainingincentivizesreinforcement} has validated that integrating CoT QA data, particularly long-CoT samples, during the mid-training stage significantly enhances the reasoning capabilities required for complex problem-solving.  This approach offers a promising avenue for enhancing the reasoning capabilities of foundation models. Notable large-scale open-source long-CoT QA datasets include OpenR1-Math-220k~\citep{openr1} and AM-DeepSeek-R1-Distilled~\citep{zhao202514millionopensourcedistilled}, with their long-CoT responses predominantly generated by DeepSeek-R1.  Such developments underscore the importance of strategically aligning mid-training data with downstream tasks as a potent strategy for expanding the reasoning proficiency of LLMs.

\section{Conclusion}
In conclusion, our proposed CoTP framework presents a significant advancement in expanding LLM reasoning potential. By defining reasoning potential as the inverse of the number of independent attempts required to correctly answer the question, we have proposed a framework that constructs high-value reasoning data. Our dual-granularity algorithm, which leverages reasoning patterns and token entropy, efficiently selects valuable CoT data similar to the core reference set, thereby enriching reasoning patterns in foundation models. Through the construction of the CoTP dataset, we have enabled the 85A6B MoE foundational model to achieve a \textbf{9.58\%} improvement on the challenging AIME 2025 \& 2024 and
to raise the upper bound of downstream RL performance by \textbf{7.81\%}.

\section{Ethics Statement}
Our research adheres to the ICLR Code of Ethics. We have carefully read the ethical guidelines and ensured that our research does not present ethical concerns.

\section{Reproducibility Statement}
We provide comprehensive details throughout various sections, including the main paper, appendix, and supplementary materials, to ensure reproducibility. We ensure all experiments and analyses are clearly described, with additional details provided in the appendix. The source codes of the algorithm are provided in the supplementary materials. Explanations and complete proofs of the theoretical results are also included in the appendix. Details about the experiments, including the datasets and the computing infrastructure, are provided in both the appendix and the supplementary materials.

\bibliography{iclr2026_conference}
\bibliographystyle{iclr2026_conference}

\clearpage

\appendix

\section{The Use of Large Language Models (LLMs)}
We utilize LLMs solely to refine our manuscript. Specifically, LLMs are employed to identify and correct errors in spelling, grammar, punctuation, and formatting at the sentence level.

\section{Mathematical Formulations and Algorithms\label{app:math-algorithms}}
\subsection{Pattern Importance Calculation via TF-IDF\label{tf-idf}}
We employ a TF-IDF weighting scheme to quantify the importance of reasoning patterns, balancing pattern frequency within individual questions against their global rarity across the dataset.

The importance of pattern $\rho_k$ for question $q_i$ is calculated using a TF-IDF weighting scheme:
\begin{equation}
\begin{aligned}
     \omega(\rho_k \mid q_i, \mathcal{Q}) &= \text{TF}(\rho_k, q_i) \times \text{IDF}(\rho_k, \mathcal{Q})  \\
     \text{TF}(\rho_k, q_i) &= \frac{\text{count}(\rho_k \text{ in } \bigcup_{j=1}^r \xi(c_{i,j}))}{\sum_{\rho} \text{count}(\rho \text{ in } \bigcup_{j=1}^r \xi(c_{i,j}))}  \\
     \text{IDF}(\rho_k, \mathcal{Q}) &= \log\left(\frac{|\mathcal{Q}|}{|\{q_i : \rho_k \in \bigcup_{j=1}^r \xi(c_{i,j}), q_i \in \mathcal{Q}\}|}\right)
\end{aligned}
\end{equation}

where $\xi(c_{i,j})$ represents the pattern chain extracted from the $j$-th CoT data of $q_i$.
\subsection{Equivalence to Perfect Bipartite Matching\label{appendix:equivalence-proof}}
We establish the theoretical foundation for our assignment optimization by proving the equivalence between the capacity-constrained assignment problem and the perfect bipartite matching problem.
\begin{theorem}
The capacity-constrained assignment problem in Eq.~(\ref{eq:assignment-problem}) is equivalent to a minimum weight perfect bipartite matching problem.
\end{theorem}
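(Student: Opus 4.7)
The plan is to exhibit an explicit cost-preserving correspondence between feasible solutions of Eq.~(\ref{eq:assignment-problem}) and perfect matchings in an auxiliary balanced bipartite graph, and then appeal to the Hungarian algorithm on that graph. First I would construct $G = (U \cup V, E, w)$ as follows: let $U = \{u_{i,k} : 1 \le i \le t,\ 1 \le k \le o\}$, where $u_{i,k}$ is the $k$-th replica of core instance $i$, and let $V = \{v_j : 1 \le j \le N\}$ represent the source instances. Since $T = t\cdot o < N$, I would pad $U$ with $N - t\cdot o$ dummy vertices $u^{\mathrm{dum}}_\ell$ so that $|U| = |V| = N$. Assign weights $w(u_{i,k}, v_j) = D_{ij}$ for each replica and source, and $w(u^{\mathrm{dum}}_\ell, v_j) = 0$ for each dummy.

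Next I would spell out the two directions of the correspondence. Given a feasible $S$, for each $i$ enumerate the $o$ indices $\{j : S_{ij} = 1\}$ as $j^{(i)}_1, \ldots, j^{(i)}_o$ and match $u_{i,k} \leftrightarrow v_{j^{(i)}_k}$; the remaining $N - t\cdot o$ source vertices, whose count is forced by $\sum_{i,j} S_{ij} = t\cdot o$, are matched bijectively to the dummies. Conversely, given a perfect matching $M$ of $G$, set $S_{ij} = 1$ iff some $u_{i,k}$ is matched to $v_j$ under $M$. Then the row-sum constraint $\sum_j S_{ij} = o$ follows because each of the $o$ replicas of core $i$ is matched to exactly one $V$-vertex, the column-sum constraint $\sum_i S_{ij} \le 1$ follows because each $v_j$ is matched to at most one non-dummy vertex in $U$, and the objective is preserved since dummy edges contribute $0$: $w(M) = \sum_{i,k,j} D_{ij}\,\mathbf{1}[(u_{i,k}, v_j) \in M] = \sum_{i,j} D_{ij}\, S_{ij}$.

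To finish, I would observe that minimum-weight perfect matching in a balanced bipartite graph is precisely the classical linear assignment problem, solvable to optimality by the Hungarian algorithm; since the forward map sends any feasible $S$ to a matching of equal cost and the reverse map sends any perfect matching to a feasible $S$ of equal cost, the two optimal values coincide, and an optimum on either side is mapped to an optimum on the other.

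The main technical subtlety, rather than a deep obstacle, is a matter of bookkeeping in the reverse direction and in the many-to-one nature of the forward map: distinct orderings of the $o$ replicas within a single core yield distinct perfect matchings that project to the same $S$, and source columns with $\sum_i S_{ij} = 0$ correspond exactly to sources matched to dummies. It therefore suffices to phrase the equivalence at the level of optimal values rather than a one-to-one correspondence of feasible solutions; one must also verify that the counting identity $|\{j : \sum_i S_{ij} = 0\}| = N - t\cdot o$, which ensures the dummy side of the matching is well-defined, is a direct consequence of the row-sum equality constraint.
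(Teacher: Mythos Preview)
Your proof is correct and follows essentially the same construction as the paper: replicate each core instance $o$ times, pad with dummy vertices to balance the bipartition, and reduce to minimum-weight perfect matching solved by the Hungarian algorithm. The only difference is that you assign weight $0$ to dummy edges whereas the paper uses a large constant $M > \max_{i,j} D_{ij}$; since every perfect matching necessarily uses exactly $N - t\cdot o$ dummy edges, either choice contributes a fixed additive constant, and your choice of $0$ simply makes the cost correspondence exact rather than up to that constant.
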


\begin{proof}
We construct a balanced bipartite graph $G'=(U' \cup V', E')$ where both partitions have size $N$. The left partition $U' = \{u_{i,k} : i \in [1,t], k \in [1,o]\} \cup \{u_{\text{dummy},\ell} : \ell \in [1, N - t \cdot o]\}$ contains $t \cdot o$ replica nodes (each core instance $i$ replicated $o$ times) plus $(N - t \cdot o)$ dummy nodes. The right partition $V' = \{v_j : j \in [1,N]\}$ contains the original $N$ source instances.

Edge weights are defined as:
\begin{align}
w(u_{i,k}, v_j) &= D_{ij}, \quad \forall i \in [1,t], k \in [1,o], j \in [1,N] \\
w(u_{\text{dummy},\ell}, v_j) &= M, \quad \forall \ell \in [1, N - t \cdot o], j \in [1,N]
\end{align}
where $M > \max_{i,j} D_{ij}$ is a sufficiently large constant.

The transformed problem seeks a minimum weight perfect matching in $G'$:
\begin{equation}
\begin{aligned}
\min_{\mathbf{X}} \quad & \sum_{i=1}^{t}\sum_{k=1}^{o}\sum_{j=1}^{N} D_{ij}X_{i,k,j} + \sum_{\ell=1}^{N-t \cdot o}\sum_{j=1}^{N} M \cdot X_{\text{dummy},\ell,j} \\
\text{s.t.} \quad & \sum_{j=1}^{N} X_{u,j} = 1, \quad \forall u \in U' \\
& \sum_{u \in U'} X_{u,j} = 1, \quad \forall j \in [1,N] \\
& X_{u,j} \in \{0,1\}
\end{aligned}
\end{equation}

We establish equivalence by showing that feasible solutions correspond bijectively with identical objective values.

Given any feasible solution $\mathbf{S}$ to the original problem, we construct a perfect matching $\mathbf{X}$ as follows. For each core $i$, let $J_i = \{j : S_{ij} = 1\}$ be its assigned sources with $|J_i| = o$. Order $J_i$ arbitrarily as $\{j_1^{(i)}, \ldots, j_o^{(i)}\}$ and set $X_{i,k,j_k^{(i)}} = 1$ for $k \in [1,o]$. This matches exactly $t \cdot o$ sources to replica nodes. The remaining $(N - t \cdot o)$ unmatched sources are matched to dummy nodes with cost $M$. The objective value is:
\begin{equation}
\sum_{i,k,j} D_{ij}X_{i,k,j} + (N - t \cdot o) \cdot M = \sum_{i,j} D_{ij}S_{ij} + (N - t \cdot o) \cdot M
\end{equation}

Conversely, given any minimum weight perfect matching $\mathbf{X}$ in $G'$, we construct $\mathbf{S}$ by setting $S_{ij} = 1$ if there exists $k$ such that $X_{i,k,j} = 1$ for replica node $u_{i,k}$, and $S_{ij} = 0$ otherwise. Since $M > \max_{i,j} D_{ij}$, any optimal perfect matching will minimize the use of dummy nodes. Specifically, exactly $(N - t \cdot o)$ dummy nodes must be matched (due to perfect matching constraints), and the optimal solution will match all $t \cdot o$ replica nodes to distinct sources to minimize total cost. Therefore, each core $i$ has exactly $o$ matched replicas, giving $\sum_{j} S_{ij} = o$, and each source is matched to at most one replica, giving $\sum_{i} S_{ij} \leq 1$. The objective values satisfy:
\begin{equation}
\sum_{i,j} D_{ij}S_{ij} = \sum_{i,k,j} D_{ij}X_{i,k,j}
\end{equation}

This bijective correspondence between feasible solutions with proportional objective values establishes that optimal solutions of both problems correspond exactly.
\end{proof}

\subsection{Algorithm}
We implement two complementary distance metrics for measuring similarity between reasoning patterns and pattern chains: character n-gram cosine distance for lexical similarity (detailed in Algorithm~\ref{alg:ngram-cosine}) and weighted dynamic time warping for structural alignment (detailed in Algorithm~\ref{alg:weighted-dtw}).

\begin{algorithm}
\caption{Character N-gram Cosine Distance}
\label{alg:ngram-cosine}
\textbf{Input:} String $a$; String $b$; Maximum n-gram length $n$ \\
\textbf{Output:} N-gram cosine distance $d_{\text{ngram}}(a, b)$
\begin{algorithmic}[1]
\STATE $a \leftarrow \text{normalize}(a)$; $b \leftarrow \text{normalize}(b)$
\STATE Initialize frequency maps $F_a \leftarrow \{\}$, $F_b \leftarrow \{\}$
\STATE \textbf{for} $k = 1$ \textbf{to} $n$ \textbf{do}: 
\STATE \quad \textbf{for} $i = 1$ \textbf{to} $|a|-k+1$ \textbf{do}: $F_a[a[i:i+k-1]] \leftarrow F_a[a[i:i+k-1]] + 1$
\STATE \quad \textbf{for} $i = 1$ \textbf{to} $|b|-k+1$ \textbf{do}: $F_b[b[i:i+k-1]] \leftarrow F_b[b[i:i+k-1]] + 1$
\STATE $\text{dot} \leftarrow \sum_{g} F_a[g] \cdot F_b[g]$
\STATE $\|F_a\|^2 \leftarrow \sum_{g} F_a[g]^2$; $\|F_b\|^2 \leftarrow \sum_{g} F_b[g]^2$
\STATE \textbf{if} $\|F_a\|^2 = 0$ \textbf{or} $\|F_b\|^2 = 0$ \textbf{then} \textbf{return} $0.0$
\STATE \textbf{return} $1.0 - \frac{\text{dot}}{\sqrt{\|F_a\|^2 \cdot \|F_b\|^2}}$
\end{algorithmic}
\end{algorithm}

\begin{algorithm}
\caption{Weighted Dynamic Time Warping Distance}
\label{alg:weighted-dtw}
\textbf{Input:} Sequence $\mathbf{x} = (x_1, \ldots, x_n)$; Sequence $\mathbf{y} = (y_1, \ldots, y_m)$; Weight vector $\mathbf{w} = (w_1, \ldots, w_m)$; Distance function $\delta$ \\
\textbf{Output:} Distance value
\begin{algorithmic}[1]
\STATE \textbf{if} $n = 0$ \textbf{or} $m = 0$ \textbf{then} \textbf{return} $1.0$
\STATE Initialize matrices $\mathbf{D}, \mathbf{W} \in \mathbb{R}^{(n+1) \times (m+1)}$ with zeros

\STATE \textbf{for} $i = 1$ \textbf{to} $n$ \textbf{do}:
\STATE \quad $D_{i,0} \leftarrow D_{i-1,0} + w_1 \cdot \delta(x_i, y_1)$; $W_{i,0} \leftarrow W_{i-1,0} + w_1$
\STATE \textbf{for} $j = 1$ \textbf{to} $m$ \textbf{do}: 
\STATE \quad $D_{0,j} \leftarrow D_{0,j-1} + w_j \cdot \delta(x_1, y_j)$; $W_{0,j} \leftarrow W_{0,j-1} + w_j$

\STATE \textbf{for} $i = 1$ \textbf{to} $n$ \textbf{do}:
\STATE \quad \textbf{for} $j = 1$ \textbf{to} $m$ \textbf{do}:
\STATE \quad\quad $d \leftarrow \delta(x_i, y_j)$
\STATE \quad\quad \textbf{if} $D_{i-1,j-1} \leq D_{i,j-1}$ \textbf{and} $D_{i-1,j-1} \leq D_{i-1,j}$ \textbf{then} 
\STATE \quad\quad\quad $(D_{\text{prev}}, W_{\text{prev}}) \leftarrow (D_{i-1,j-1}, W_{i-1,j-1})$
\STATE \quad\quad \textbf{elif} $D_{i,j-1} \leq D_{i-1,j}$ \textbf{then} 
\STATE \quad\quad\quad $(D_{\text{prev}}, W_{\text{prev}}) \leftarrow (D_{i,j-1}, W_{i,j-1})$
\STATE \quad\quad \textbf{else} $(D_{\text{prev}}, W_{\text{prev}}) \leftarrow (D_{i-1,j}, W_{i-1,j})$
\STATE \quad\quad $D_{i,j} \leftarrow D_{\text{prev}} + w_j \cdot d$; $W_{i,j} \leftarrow W_{\text{prev}} + w_j$

\STATE \textbf{return} $\frac{D_{n,m}}{W_{n,m}}$ \textbf{if} $W_{n,m} > 0$ \textbf{else} $0.0$
\end{algorithmic}
\end{algorithm}

\section{Experimental Setup\label{app:exp}}
\subsection{Training Details\label{app:exp_train}}


We use H800 to mid-train our pre-trained 85A6B Mixture-of-Experts foundation model, which is
pre-trained on 14T-token corpora. The model decays on 30B tokens of specialized experimental reasoning data and general-domain data, KnowEdu \citep{zhang2025largescalediversesynthesismidtraining,zhang2025linkqa} at a 1:2 ratio, using the WSD scheduler with the learning rate initialized at 1e-4. The reasoning data follows a vanilla structured format of \textit{\{question\}\textbackslash n\{cot\_answer\}} with final answers encapsulated in \textit{\textbackslash boxed\{\} }notation.  In scaling experiments, the mid-training token size is further expanded to 60B while maintaining the same data blend ratio.

Furthermore, we utilize identical SFT data across all models, thereby eliminating the introduction of additional variables and ensuring consistency in evaluation conditions. The SFT stage takes a batch size of 256 and an epoch size of 3, processing inputs up to 16,384 tokens to accommodate extended reasoning sequences. It is optimized by the Adam algorithm and initializes the learning rate at 5e-6 with cosine decay to zero. 

The RL stage runs 300 steps for all models and applies the GSPO algorithm~\citep{zheng2025groupsequencepolicyoptimization}. Its hyperparameters include a global batch size of 512, a rollout batch size of 256, and a single epoch, optimized by the Adam algorithm with a constant learning rate of 1.0e-6. The generation max sequence length is set to 32,768, with the sample number at 16, temperature and top\_p both at 1, and GSPO-specific clip range between 0.0003 and 0.0004.


\subsection{Evaluation\label{app:eval}}

We conduct an evaluation of the SFT and RL performance on challenging mathematical reasoning tasks, similar to~\citet{balunovic2025matharena}. For statistical robustness, we use 32 repetitions for AIME and HMMT, and 10 repetitions for BeyondAIME, once for MATH500.

\subsection{Datasets\label{app:exp_data}}
Table~\ref{tab:app_exp_data} shows the comparison with open-source CoT QA datasets.

\begin{table}[ht]
    \caption{Comparison with open-source CoT QA datasets.}
    \label{tab:app_exp_data}
    \begin{center}
    \begin{tabular}{llll}
    \toprule
    \bf Dataset  & \bf Target Domain & \bf CoT & \bf Date  \\
    \midrule
    JiuZhang3.0 & Mathematical Reasoning & Short-CoT & 2024 May.  \\
    OpenMathInstruct-2 & Mathematical Reasoning & Short-CoT & 2024 Oct.  \\
    MegaMathQA & Mathematical Reasoning & Short-CoT & 2025 Apr.  \\
    OpenR1-Math-220k & Mathematical Reasoning & Long-CoT & 2025 Feb.  \\
    AM-DeepSeek-R1-Distilled & General Reasoning & Long-CoT & 2025 Mar.  \\
    \midrule
    CoTP & Mathematical Reasoning & Long-CoT & 2025 \\
    \bottomrule
    \end{tabular}
    \end{center}
\end{table}

\section{Results}

\subsection{General Performance\label{app:general}}
Table~\ref{tab:main_mid-train} shows the detailed comparison of general performance, 
corresponding to the general evaluation results in Table~\ref{tab:main_sft} in Section~\ref{sec:main_results}. Our CoTP achieves better general performance than KnowEdu and LongCoTPool, demonstrating that CoTP effectively expands the reasoning potential of foundation models in challenging mathematical reasoning tasks while maintaining general performance. Note that the performance gaps between CoTP and OpenR1-Math or AM-Distilled stem from the heterogeneous nature of LongCoTPool, which integrates OpenR1-Math, AM-Distilled, and BoostQA, with BoostQA exhibiting relatively weaker general performance. Since CoTP is selected from LongCoTPool, it inherits these performance gaps. However, this does not affect our objective and the main conclusions of our experiments.

\begin{table}[ht]
    \caption{General performance comparison of models mid-trained on different datasets. The best and second-best are in bold and underlined, respectively. Abbreviations: W.G. = WinoGrande, H.S. = HellaSwag, BBH = Big-Bench.}
    \label{tab:main_mid-train}
    \begin{center}
    \setlength{\tabcolsep}{1mm}
    \begin{tabular}{lcccccccccccccc}
        \toprule
        \textbf{Dataset} & \textbf{MMLU} & \textbf{CMMLU} & \textbf{C-Eval} & \textbf{W.G.} & \textbf{H.S.} & \textbf{ARC-C} & \textbf{BBH} & \textbf{DROP} & \textbf{AVG.}\\
        \midrule
        KnowEdu & 69.83 & 75.59	& 74.30 & 58.00 & 50.50 & 83.50 & 48.86 & 54.50 & 64.39 \\
        BoostQA & 69.97 & 75.50 & 73.25 & 54.00 & 44.00 & 84.00 & 47.09 & 58.50 & 63.29 \\
        JiuZhang3.0 & 68.71 & 74.94 & 74.51 & 61.50 & 50.50 & 85.00 & 45.37 & 54.00 & 64.32 \\
        MegaMathQA & 68.80 & 75.43 & 73.71 & 62.00 & 47.50 & 85.00 & 45.85 & 60.00 & 64.79 \\
        OMInstruct-2 & 68.76 & 75.39 & 74.29 & 62.00 & 50.00 & 85.50 & 49.45 & 64.50 & 66.24 \\
        OpenR1-Math & 69.28 & 76.05 & 74.23 & 61.50 & 55.50 & 87.50 & 49.54 & 59.00 & 66.58 \\
        AM-Distilled & 69.65 & 75.73 & 74.14 & 61.50 & 63.00 & 85.00 & 50.75 & 64.00 & 67.97 \\
        \midrule
        LongCoTPool & 69.72 & 76.08 & 73.89 & 57.00 & 53.00 & 87.00 & 49.00 & 61.87 & 65.95 \\
        CoTP (Ours) & 69.54 & 75.67 & 74.33 & 59.50 & 55.50 & 83.50 & 49.11 & 61.50 & 66.08 \\
        \bottomrule
    \end{tabular}
    \end{center}
\end{table}

\subsection{Pass@k Curves}
\label{app:PassAtK_Curves}
As illustrated in Figure~\ref{fig:passk_basesft}, the pass@k curves for both base and SFT models on the CoT dataset show a stable increase as $k$ becomes larger. In contrast, for the KnowEdu corpus, the pass@k values are notably lower without the application of SFT, but they improve once SFT is applied. This indicates that SFT ensures the model has a foundational ability to produce long CoT outputs, contributing to a more equitable evaluation across different datasets. Given the consistent relative trend observed before and after SFT, we exhibit our main results on the SFT models to ensure reliability and fairness in comparison.

\begin{figure}[ht]
    \centering
    \includegraphics[width=\textwidth]{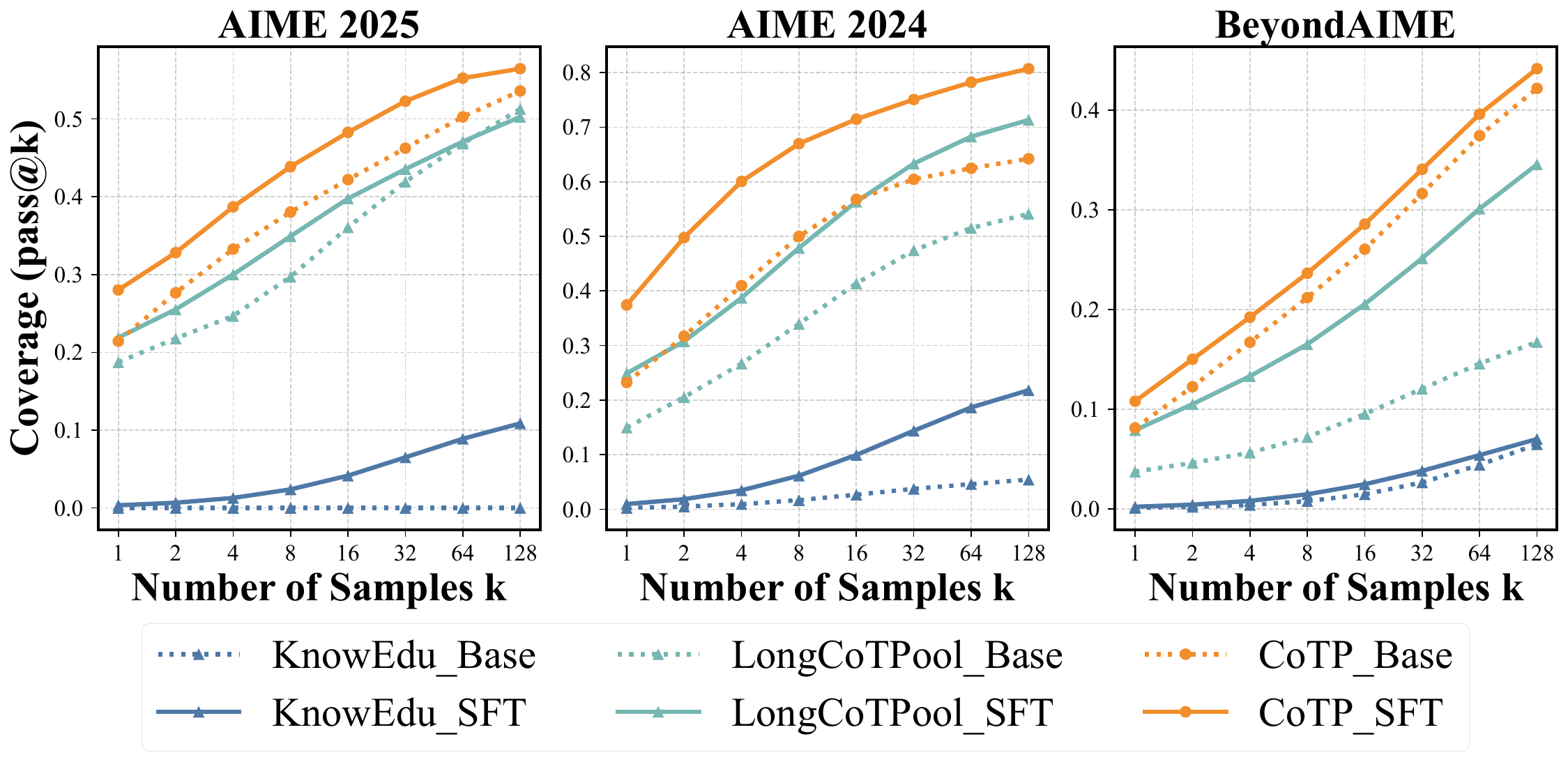}
    \caption{Pass@k curves of base and SFT models.}
    \label{fig:passk_basesft}
\end{figure}

\subsection{Scaling Details\label{app:scale}}
The specific accuracy values at 60B tokens in the scaling experiments are presented in Table~\ref{tab:data_volume_scaling}.

\begin{table}[ht]
    \caption{Accuracy of different models at 60B tokens in the experiments of data volume scale.}
    \label{tab:data_volume_scaling}
    \centering
    \setlength{\tabcolsep}{1mm}
    \begin{tabular}{l|ccccc}
        \toprule
        \textbf{Dataset} & \textbf{AIME 2025} & \textbf{AIME 2024} & \textbf{HMMT 2025} & \textbf{BeyondAIME} & \textbf{MATH500} \\
        \midrule
        KnowEdu & 0.10$\pm$0.20 & 0.94$\pm$0.61 & 3.65$\pm$1.18 & 0.30$\pm$0.34 & 42.00$\pm$4.33 \\
        LongCoTPool & 27.88$\pm$2.67 & 32.62$\pm$2.83 & 18.22$\pm$2.24 & 10.01$\pm$1.65 & 90.56$\pm$2.87 \\
        \midrule
        CoTP (Ours) & 31.52$\pm$2.87 & 43.85$\pm$3.16 & 22.60$\pm$2.68 & 14.00$\pm$2.15 & 91.00$\pm$2.51 \\
        \bottomrule
    \end{tabular}
\end{table}

\subsection{Truncation Ratio}

The truncation ratios are shown in Table~\ref{tab:main_all_trunc}. We observe that the truncation rate is notably lower in long CoT datasets, and the truncation rate decreases further in RL compared to SFT.

\begin{table}[ht]
    \caption{Comparison of truncation ratio (\%).}
    \label{tab:main_all_trunc}
    \begin{center}
    \setlength{\tabcolsep}{1mm}
    \begin{tabular}{l|cc|cc|cc|cc}
        \toprule
        \multirow{2}{*}[-0.5ex]{\bf Dataset} & \multicolumn{2}{c|}{\bf AIME 2025} & \multicolumn{2}{c|}{\bf AIME 2024} & \multicolumn{2}{c|}{\bf HMMT 2025} & \multicolumn{2}{c}{\bf BeyondAIME} \\
        \cmidrule{2-9}
        & \bf SFT & \bf RL & \bf SFT & \bf RL & \bf SFT & \bf RL & \bf SFT & \bf RL  \\
        \midrule
        KnowEdu & 82.34	& 44.69 & 85.68 & 45.05 & 85.05 & 48.78 & 80.75 & 45.67 \\
        LongCoTPool & 70.89	& 2.24	& 73.18	& 2.94	& 82.24	& 1.61	& 82.70	& 0.90 \\
        CoTP (Ours) & 61.61	& 5.76	& 57.11	& 5.23	& 71.51	& 4.61	& 75.34	& 3.82 \\
        \bottomrule
    \end{tabular}
    \end{center}
\end{table}

\subsection{Response Length}
As depicted in Figure~\ref{fig:RL_tokens}, the response lengths during RL training exhibit different characteristics. Initially, models that have undergone SFT produce relatively longer responses. As the training progresses, the response lengths for LongCoTPool and CoTP stabilize and converge, whereas the lengths for KnowEdu continue to fluctuate significantly. This suggests that the RL training with KnowEdu may be prone to instability or collapse.

\begin{figure}[ht]
    \centering
    \includegraphics[width=\textwidth]{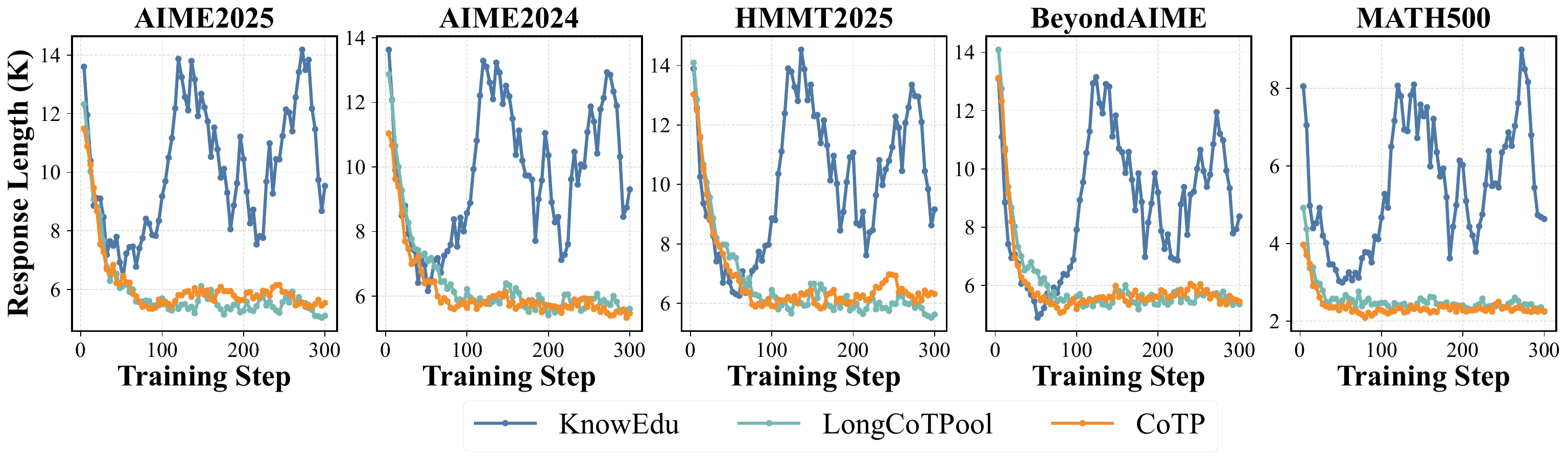}
    \caption{Response length (k) at the RL stage.}
    \label{fig:RL_tokens}
\end{figure}




\subsection{Evaluation chart on more benchmarks.}


\begin{figure}[ht]
\begin{subfigure}[t]{0.48\linewidth}
    \centering
    \includegraphics[width=\textwidth]{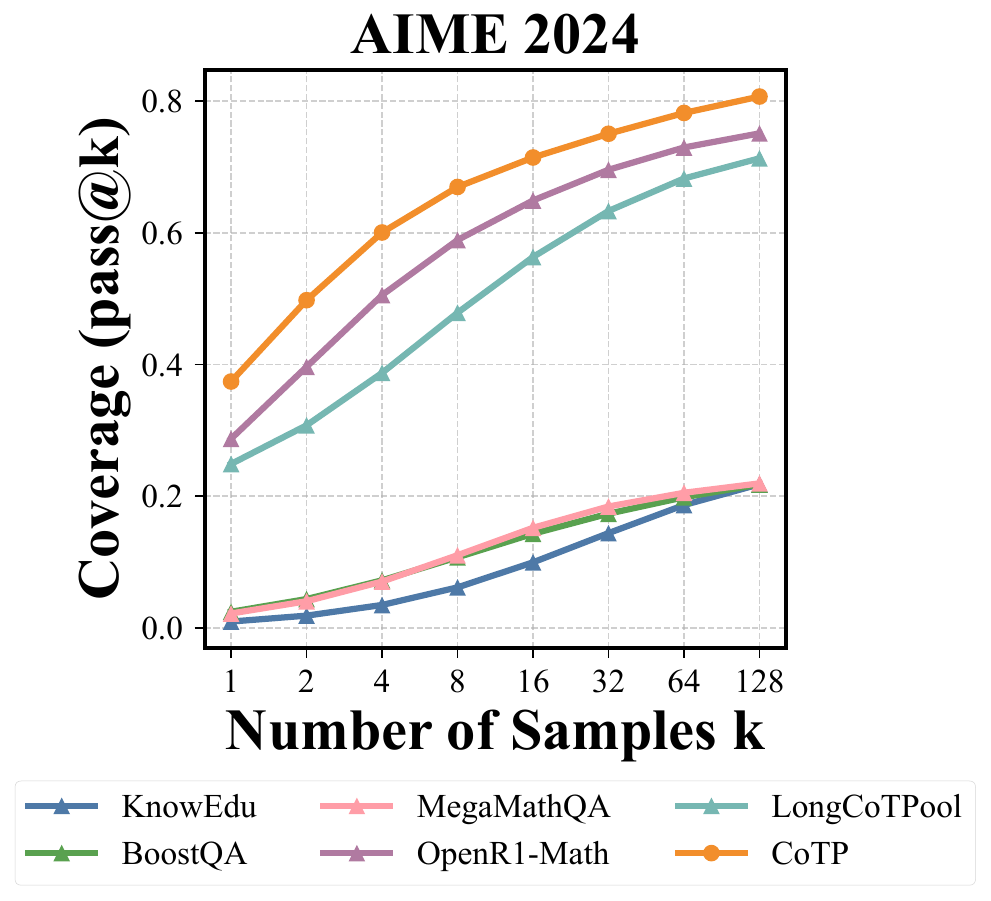}
\end{subfigure}
\begin{subfigure}[t]{0.48\linewidth}
    \centering
    \includegraphics[width=0.92\textwidth]{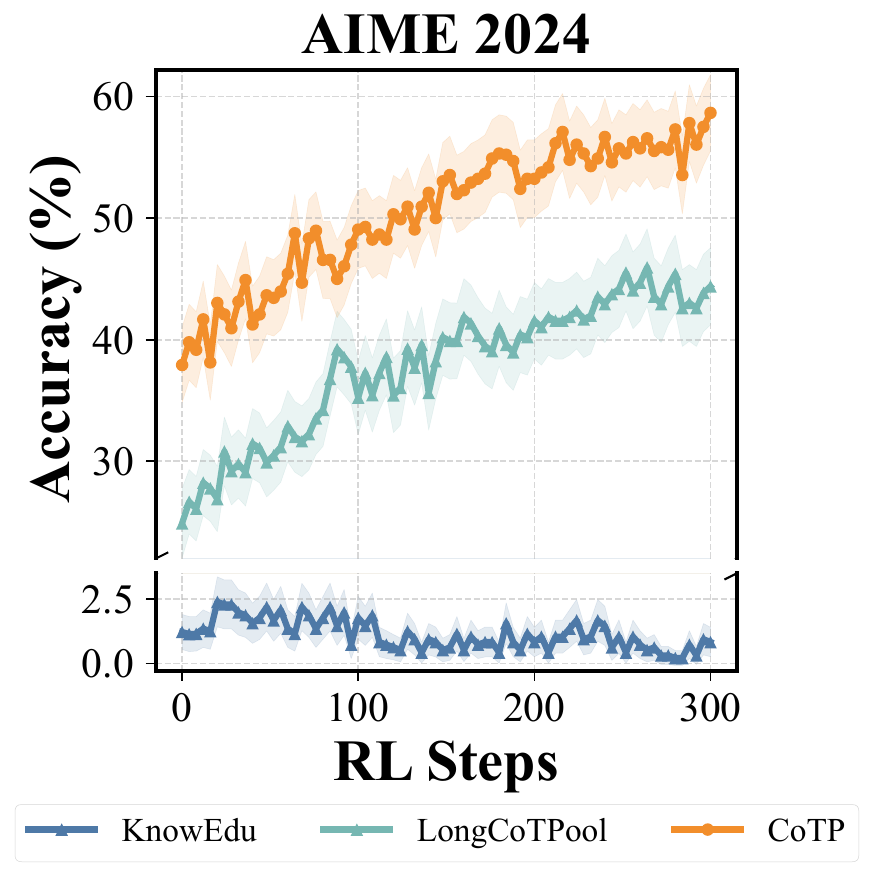}
\end{subfigure}
\caption{AIME2024.}
\label{fig:aime2024}
\end{figure}





\begin{figure}[ht]
\begin{subfigure}[t]{0.48\linewidth}
    \centering
    \includegraphics[width=\textwidth]{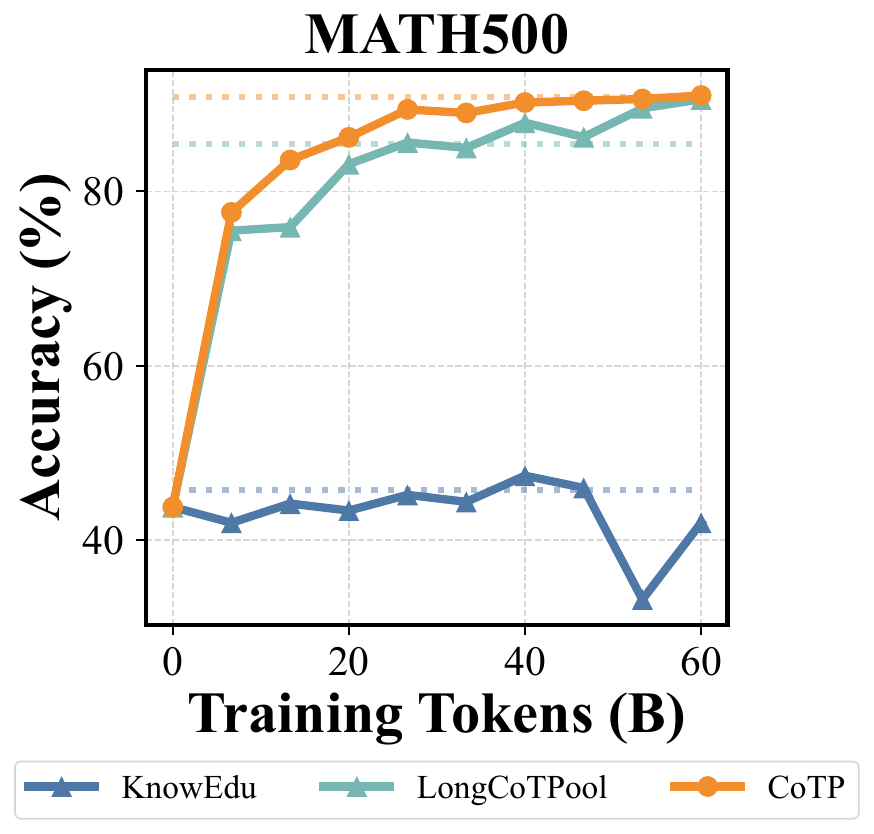}
\end{subfigure}
\begin{subfigure}[t]{0.48\linewidth}
    \centering
    \includegraphics[width=0.95\textwidth]{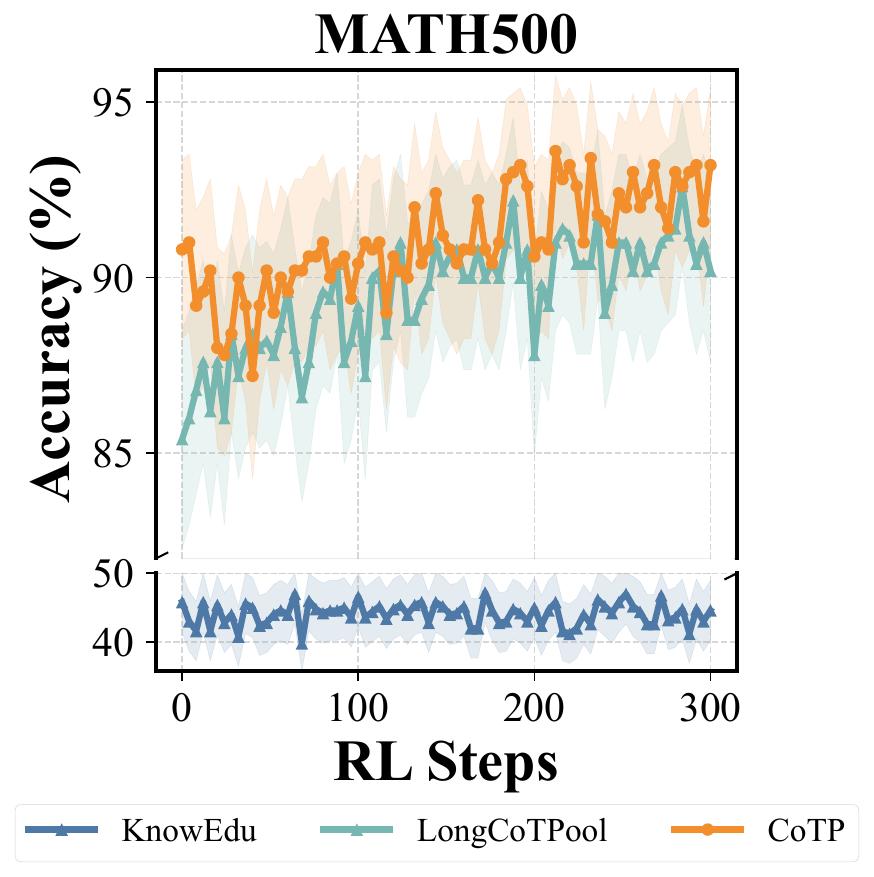}
\end{subfigure}
\caption{MATH500.}
\label{fig:math500}
\end{figure}

Figure~\ref{fig:aime2024} and Figure~\ref{fig:math500} present the pass@k evaluation curves on the AIME2024 and Math500 benchmarks, respectively, illustrating the performance trajectory throughout the RL training process. These results serve as a complement to Figure~\ref{fig:main_merge} in Section~\ref{sec:main_results} and are consistent with the results in Figure~\ref{fig:main_merge}.






\section{Reasoning Pattern Chain Annotation\label{app:pattern_anno}}

\begin{tcolorbox}[colback=white!95!gray,colframe=gray!50!black,rounded corners,label={prompt-pattern}, title={Prompt for CoT pattern chain extraction annotation}, breakable]
\begin{lstlisting}[breaklines=true, xleftmargin=0pt, breakindent=0pt, columns=fullflexible, mathescape, numbers=none, literate={`}{\textasciigrave}1 {"}{{\ttfamily\char`\"}}1]
Task Objective:
Systematically explore and summarize the Chain-of-Thought (CoT) processes employed by mainstream LLMs in reasoning tasks, analyzing the core reasoning patterns embedded within these processes.

Analysis Instructions:
Please conduct an in-depth examination of the reasoning paths taken by various AI models in reasoning tasks, demonstrating how different models approach and solve problems. Your goal is to summarize and categorize the general thinking patterns reflected in these reasoning processes, to help understand the essential characteristics of CoT reasoning in large models.

Analysis Steps:
For each reasoning process, please clearly identify the following elements:
1. Use of keywords and high-frequency phrases
2. Logical structure and organization of argumentation
3. Techniques or strategies used to solve the problem
4. The manner in which reasoning steps are unfolded

Classification Requirements:
Based on the following commonalities, accurately categorize similar reasoning processes into one or more general reasoning patterns:
1. Lexical pattern (organization and use of common terms and phrases)
2. Logical framework (structure of argumentation and reasoning flow)
3. Solution pathway (methods and paths to reach conclusions)

Important Notes:
1. You are required to summarize "general thinking patterns for problem solving," not specific problem types.
2. Each pattern should be applicable to any problem scenario, not limited to a particular type of task.
3. Focus on the thinking method itself, rather than specific solution steps or answer content.

Illustrative Examples:
You may categorize as follows:
- Knowledge retrieval-based reasoning
- Reasoning combined with verification
- Step-by-step deductive calculation
- Detailed stepwise derivation
- etc.

Attention Points:
- Precisely categorize the above reasoning processes into one or more patterns (>=1), defining each category based on its shared characteristics, explaining its role in reasoning for the given case, and providing examples.
- Briefly explain your analysis and classification criteria first, then output detailed annotation for each reasoning pattern in the JSON format below. The "name" field for each pattern must be output in Chinese.
- The "pattern_chain" field outputs a list, where the element order represents the sequence of patterns used in this CoT solution, e.g., [1,2,3,4]. If necessary, the reasoning pattern chain may contain loops.
- Output atomic patterns only (no pattern should contain words like "and", "or", etc.).

Output Format:
```json
{
	"pattern_list": [
	    {"id":1, "name": "", "description": "", "features": "", "sample_input_flow": "", "role_in_this_case": "", "corresponding_CoT_content": ["", ""], "common_elements": "", "typical_expressions": ["", ""] },
	    {"id":2, "name": "", "description": "", "features": "", "sample_input_flow": "", "role_in_this_case": "", "corresponding_CoT_content": ["", ""], "common_elements": "", "typical_expressions": ["", ""] },
	    ...
	],
	"how_CoT_utilizes_patterns_in_this_case": {
		"process_description": "",
		"pattern_chain": [],
	},
}
```
Reasoning process to be analyzed: {
	{extracted_responses}
}
\end{lstlisting}
\end{tcolorbox}

\section{Analysis\label{app:analysis}}
\subsection{Reasoning Patterns across
Different Models\label{app:analysis_pattern_cmp}}
In the comparative analysis of reasoning patterns across different models on AIME 2025 \& 2024, the number of patterns is defined as size of the entire set of reasoning patterns. Correct patterns refer to the set composed of patterns that appear in correct reasoning chains, while key patterns represent the overlap between the correct pattern sets and those mastered by DeepSeek-R1. Notably, DeepSeek-R1 has mastered 6,487 correct patterns, and the percentage indicates the proportion of the overlap within this set.

\subsection{Pattern Representation Analysis\label{app:pat_represent}}

\begin{table}[b]
\centering
\caption{The proportion of different similarity ranges of Chinese and English}
\label{tab:pat_rep_1}
\begin{tabular}{c|c|c}
\toprule
\textbf{Similarity Range} & \textbf{Chinese} & \textbf{English} \\
\midrule
Low ($<$0.3) & 88.2\% & 1.4\% \\
Medium (0.3--0.7) & 11.2\% & 83.2\% \\
High ($>$0.7) & 0.6\% & 15.4\% \\
\bottomrule
\end{tabular}
\end{table}
We conduct a comparative analysis of the pattern representations in Chinese and English. We sample pattern pairs from the CoTP dataset, manually selecting unrelated pairs that do not share core mathematical concept words. 
For example, ``Problem Understanding and Information Extraction" versus ``Trigonometric Identity Transformation" are considered unrelated because the former pertains to problem comprehension, while the latter involves a specific mathematical technique. In contrast, ``Verification and Confirmation" and ``Verification and Adjustment" are related, as both concern verification processes. We evaluate the similarity distributions of both related and unrelated pattern pairs in Chinese and English representations using the n-gram method.

As shown in Table~\ref{tab:pat_rep_1} and Table~\ref{tab:pat_rep_2}, most pattern pairs in English are identified as either moderately similar or highly similar, making it difficult to differentiate between unrelated patterns, while Chinese offers a superior ability to distinguish between patterns. Compared with Chinese, English tends to display more misjudgment cases, as illustrated in Table~\ref{tab:pat_rep_3}.

\begin{table}[htbp]
\centering
\caption{For the average similarity of unrelated pattern pairs, Chinese can score these pairs with low similarity, whereas English tends to misjudge them as similar.}
\label{tab:pat_rep_2}
\begin{tabular}{c|c}
\toprule
\textbf{Language} & \textbf{Average Similarity} \\
\midrule
Chinese & 0.035 \\
English & 0.541 \\
\bottomrule
\end{tabular}
\end{table}

\begin{CJK}{UTF8}{gbsn}
\begin{table}[htbp]
\centering
\caption{Typical misjudgment cases with high similarity in English but low in Chinese.}
\label{tab:pat_rep_3}
\begin{tabular}{p{2.5cm}|p{1cm}|p{4cm}|p{1cm}|p{2.5cm}}
\toprule
\textbf{Chinese Pattern Pair} & \textbf{Chinese Similarity} & \textbf{English Pattern Pair} & \textbf{English Similarity} & \textbf{Root Cause} \\
\toprule
问题理解与信息提取 \textit{vs} 三角恒等变换 & 0.000 & Problem Understanding and Information Extraction \textit{vs} Trigonometric Identity Transformation & 0.762 & High repetition of suffixes ``-tion'', ``-ation'' \\
\midrule
数学建模与方程构建 \textit{vs} 逻辑排除法 & 0.000 & Mathematical Modeling and Equation Construction \textit{vs} Logical Elimination Method & 0.760 & Overlapping character-level suffixes \\
\midrule
验证与修正 \textit{vs} 三角恒等变换 & 0.000 & Verification and Correction \textit{vs} Trigonometric Identity Transformation & 0.758 & Common suffix interference \\
\bottomrule
\end{tabular}
\end{table}
\end{CJK}

These results demonstrate that Chinese pattern representation exhibits clear distinctions between unrelated pattern pairs under the character-level n-gram method, which is beneficial for the consistency and controllability of large-scale extraction of patterns. Even if there is slight noise or drift between patterns in Chinese, such as a small amount of character interference, this method is essentially insensitive to these variants and can still identify semantically similar patterns. 

\subsection{Reasoning Pattern Visualization in Different Domains\label{app:cross-domain-patterns}}

To demonstrate the domain-agnostic nature of our CoTP framework, we visualize the reasoning patterns with high and low TF-IDF importance scores in Figure~\ref{fig:stem-patterns} across four representative STEM domains: physics, chemistry, biology, and computer science. These visualizations are generated from the TF-IDF weighted pattern distributions extracted from our training data.

These visualizations demonstrate that our pattern-based representation successfully captures domain-specific reasoning strategies across diverse STEM domains, supporting the claim that CoTP is applicable to any scenario that can be decomposed into atomic reasoning patterns.

\begin{figure}[htbp]
    \centering
    \begin{subfigure}[b]{0.48\textwidth}
        \includegraphics[width=\textwidth]{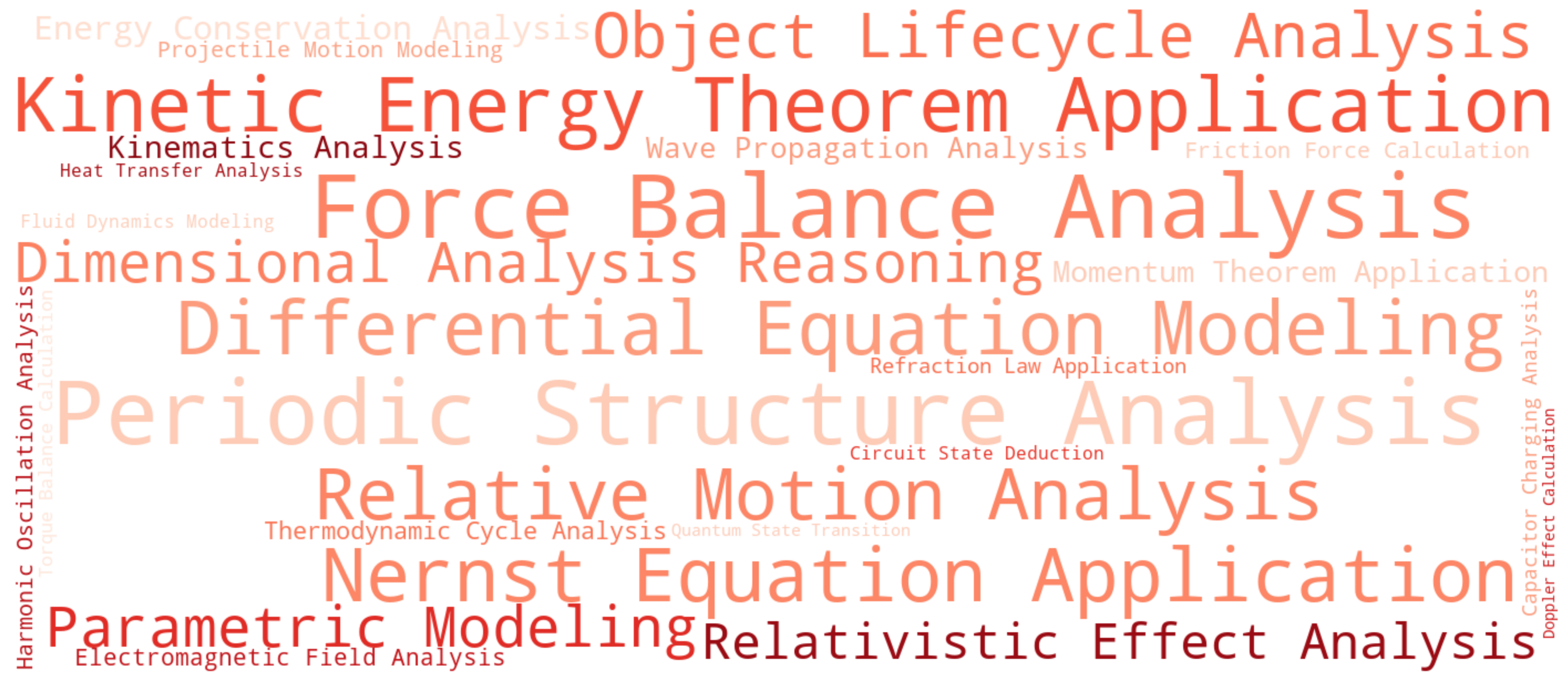}
        \caption{Physics: high importance scores.}
    \end{subfigure}
    \hfill
    \begin{subfigure}[b]{0.48\textwidth}
        \includegraphics[width=\textwidth]{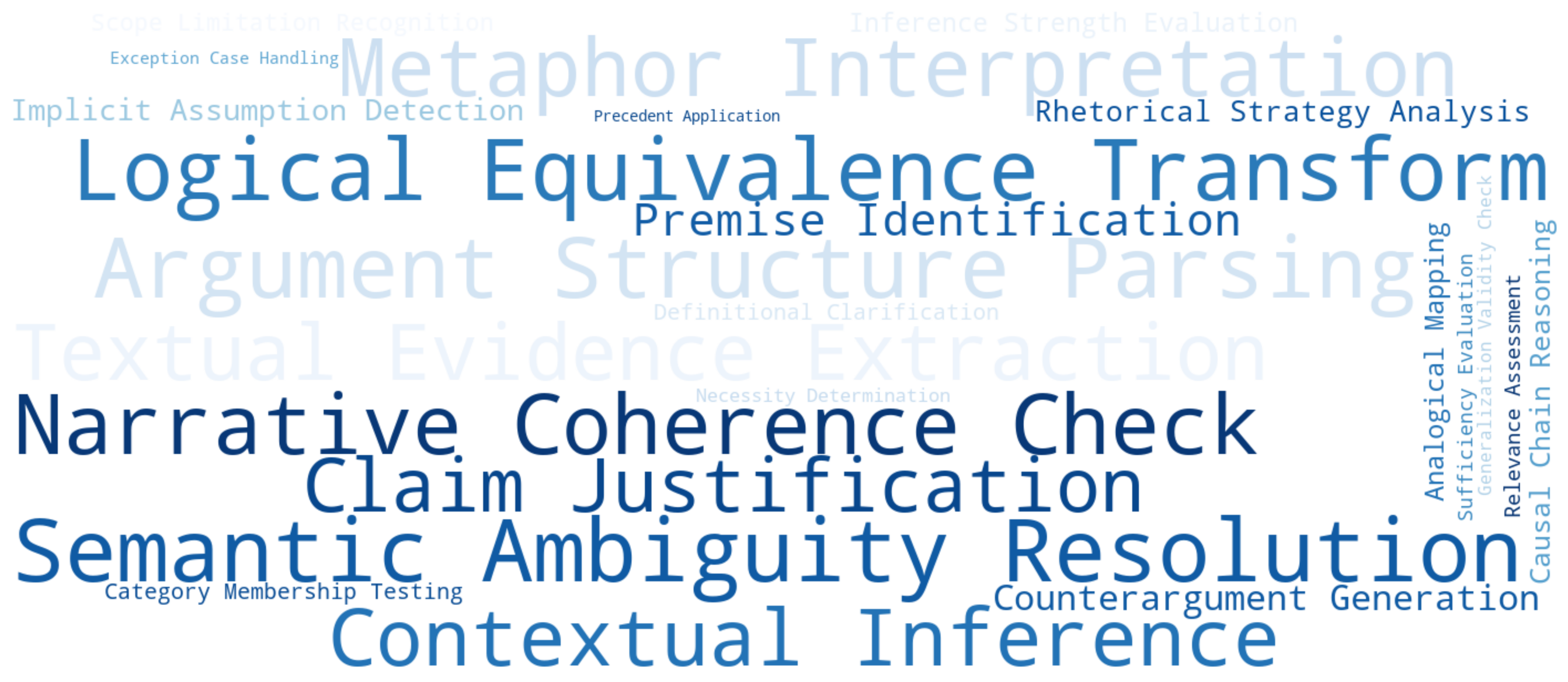}
        \caption{Physics: low importance scores.}
    \end{subfigure}
    
    \vspace{0.5em}
    
    \begin{subfigure}[b]{0.48\textwidth}
        \includegraphics[width=\textwidth]{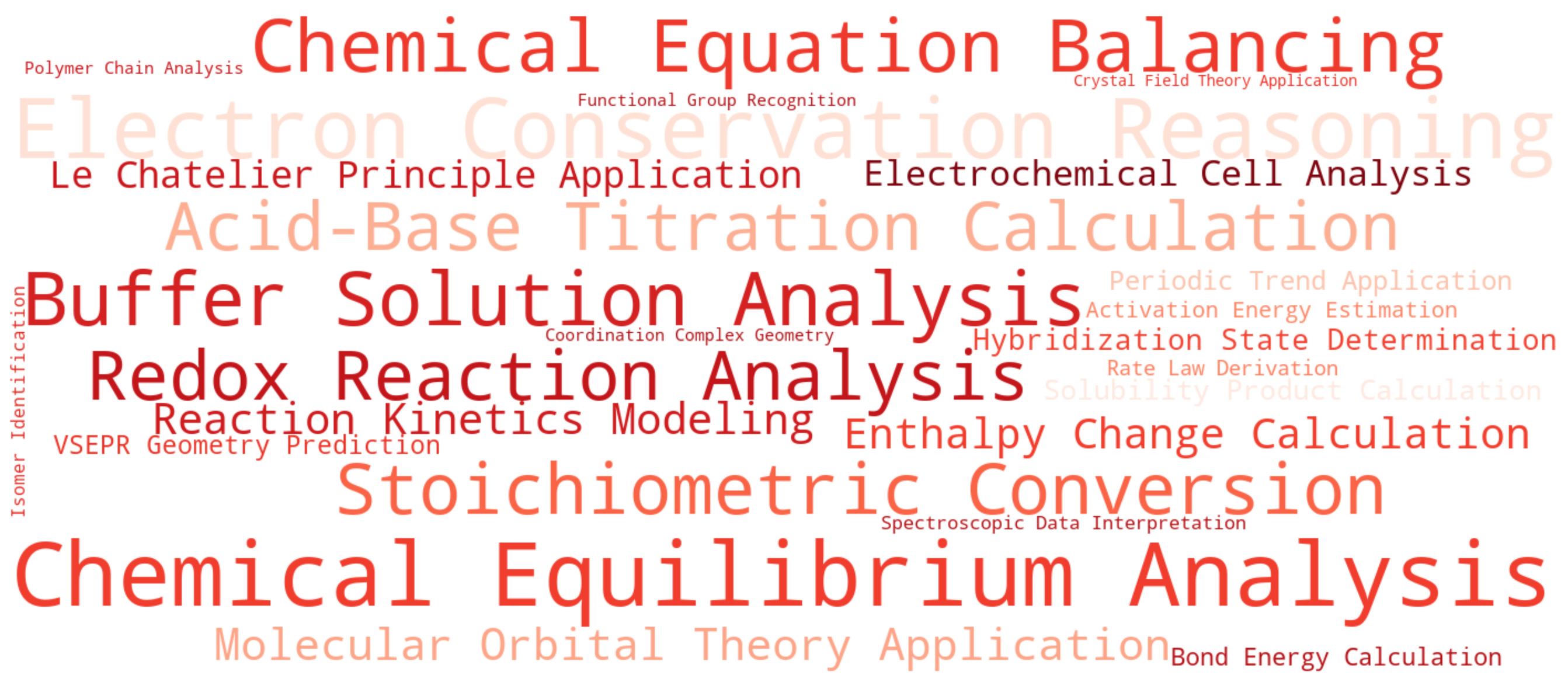}
        \caption{Chemistry: high importance scores.}
    \end{subfigure}
    \hfill
    \begin{subfigure}[b]{0.48\textwidth}
        \includegraphics[width=\textwidth]{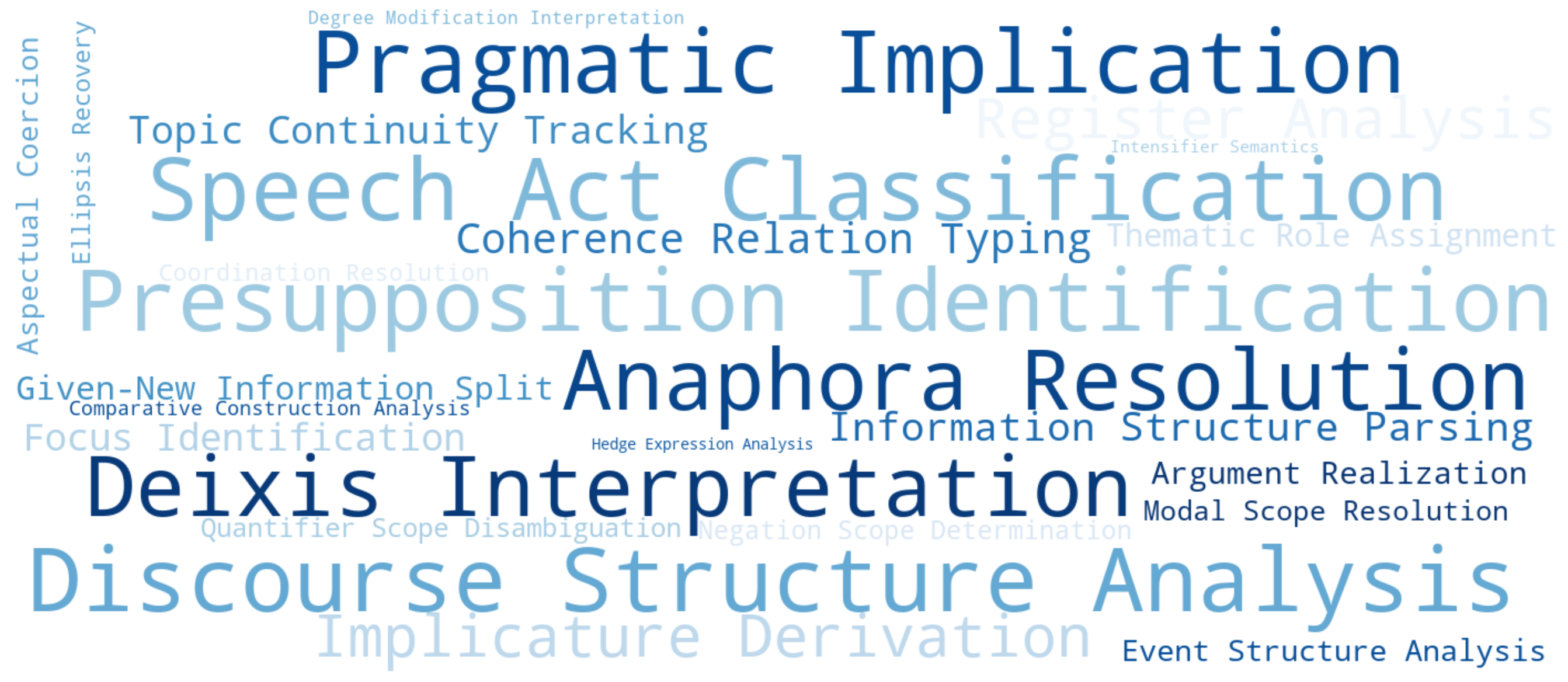}
        \caption{Chemistry: low importance scores.}
    \end{subfigure}
    
    \vspace{0.5em}
    
    \begin{subfigure}[b]{0.48\textwidth}
        \includegraphics[width=\textwidth]{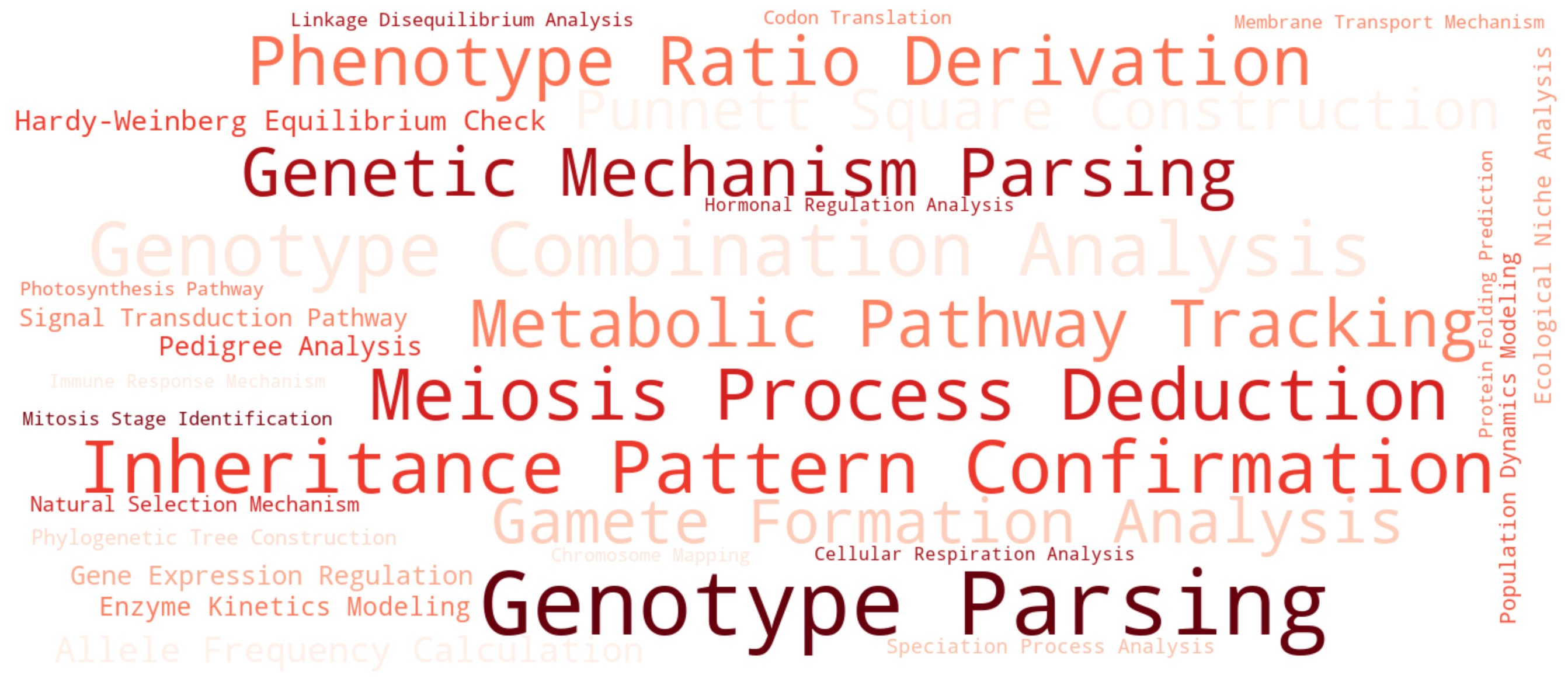}
        \caption{Biology: high importance scores.}
    \end{subfigure}
    \hfill
    \begin{subfigure}[b]{0.48\textwidth}
        \includegraphics[width=\textwidth]{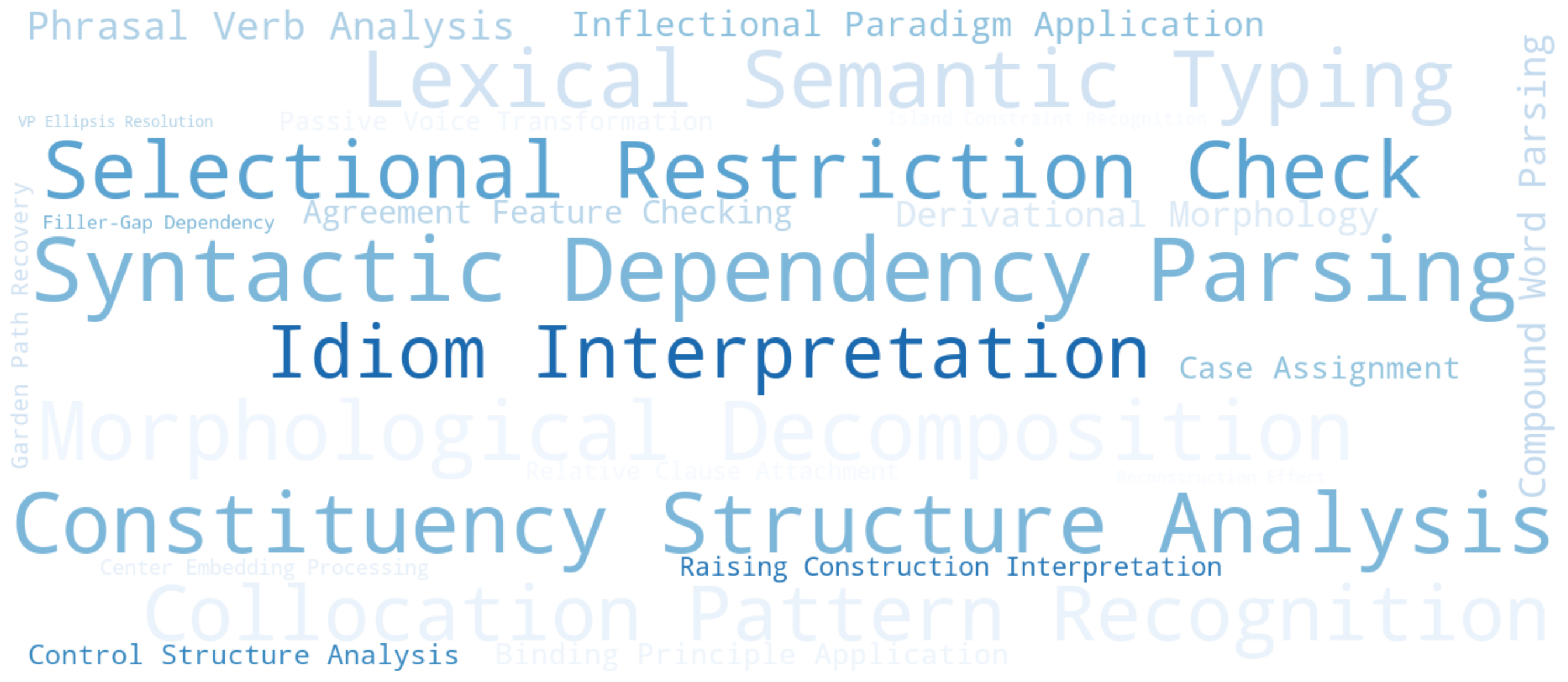}
        \caption{Biology: low importance scores.}
    \end{subfigure}
    
    \vspace{0.5em}
    
    \begin{subfigure}[b]{0.48\textwidth}
        \includegraphics[width=\textwidth]{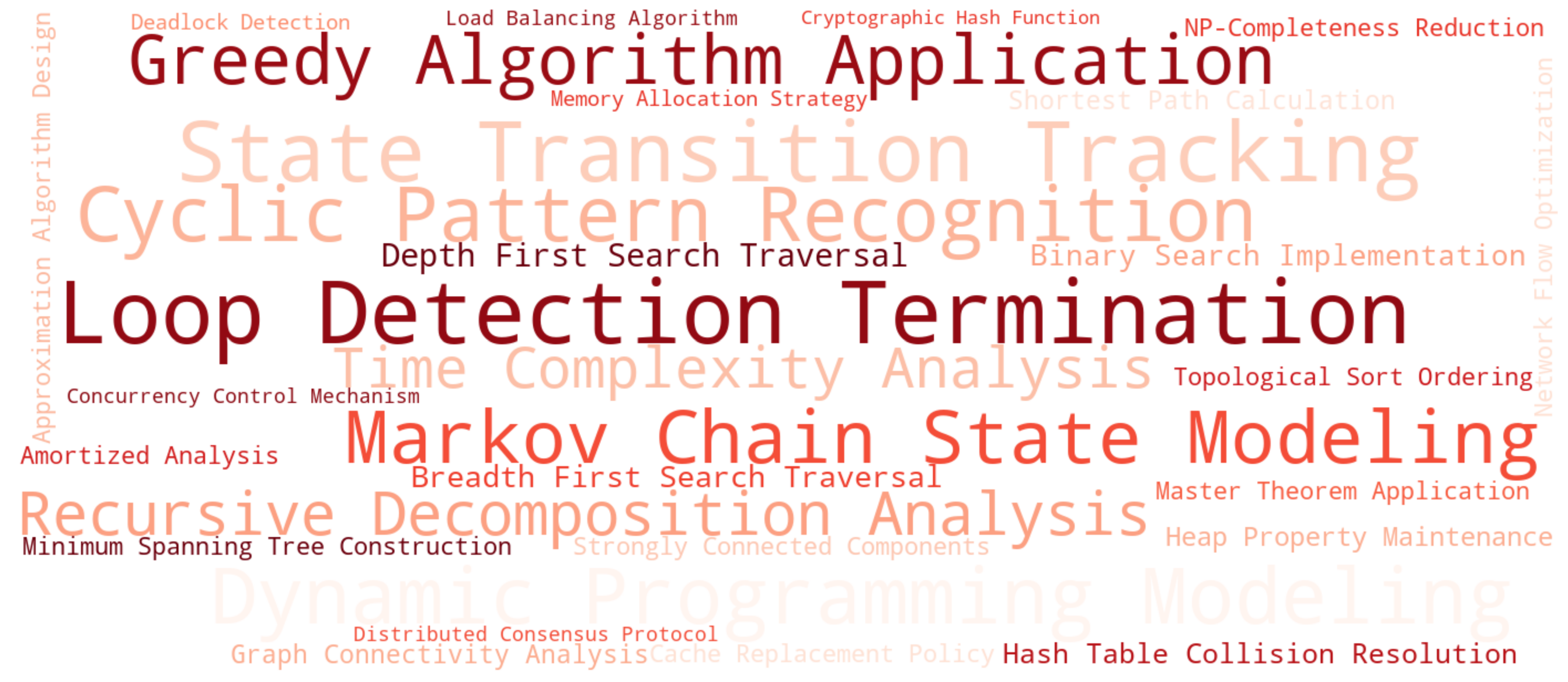}
        \caption{Computer science: high importance scores.}
    \end{subfigure}
    \hfill
    \begin{subfigure}[b]{0.48\textwidth}
        \includegraphics[width=\textwidth]{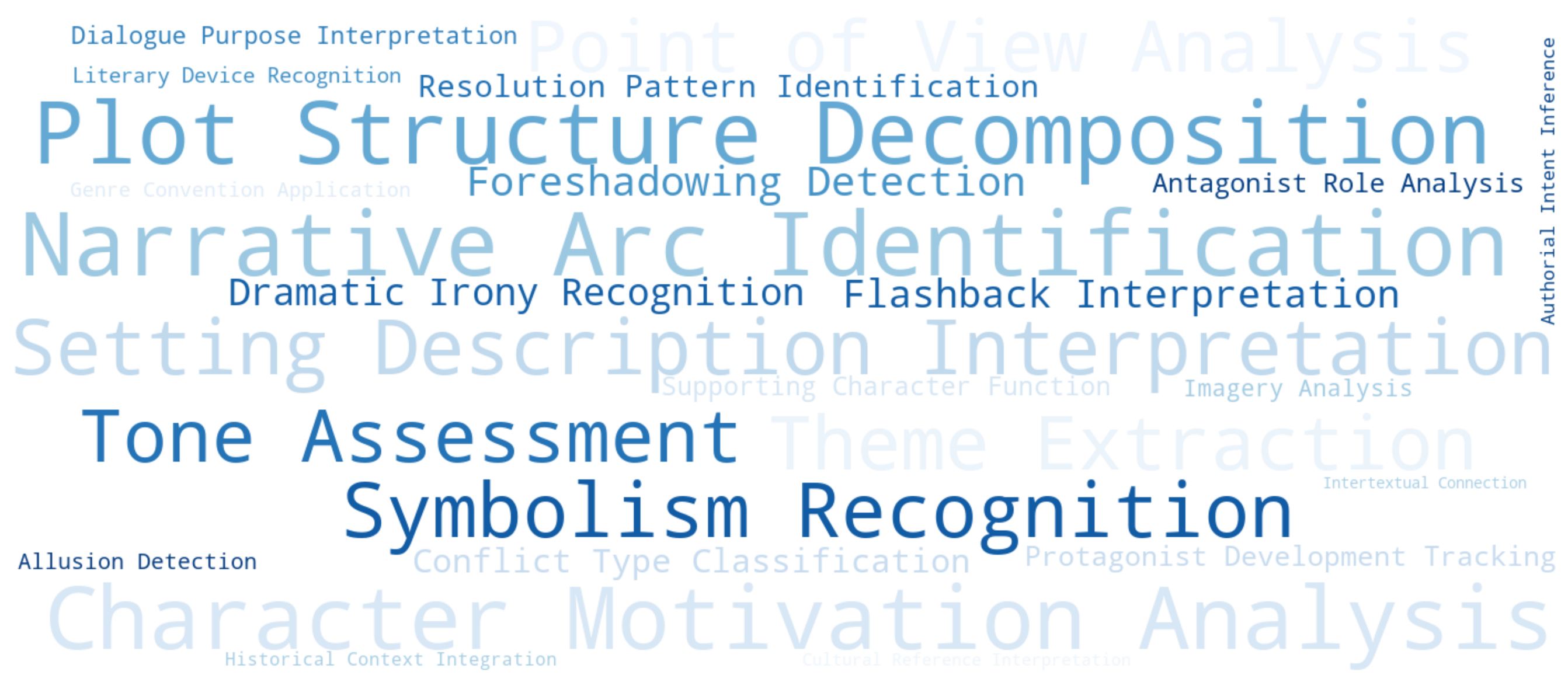}
        \caption{Computer science: low importance scores.}
    \end{subfigure}
    
    \caption{Examples of reasoning patterns across four STEM domains. For each domain, the left column (red) shows patterns with high importance scores, while the right column (blue) displays patterns with low importance scores.}
    \label{fig:stem-patterns}
\end{figure}

\section{Case Study\label{app:case}}
In practice, we found that, unlike English, each character in Chinese possesses inherent semantic meaning, which makes Chinese particularly suitable for calculating distances between pattern entries using the ngram-cosine algorithm (see Algorithm~\ref{alg:ngram-cosine}). Consequently, we employ a annotation prompt (see Appendix~\ref{app:pattern_anno}) to extract pattern chain features from the CoT data, and require the annotation model to output the patterns in Chinese, as described in Section~\ref{sec:CoreSetApproximatesOracle}.

For clarity in the presentation of experimental results, as illustrated in Figure~\ref{fig:wordcloud}, we have translated the pattern entries into English. The original Chinese-to-English mapping of pattern expressions is provided in Table~\ref{tab:Reasoning_Pattern_Translation}.

\begin{CJK}{UTF8}{gbsn}
\begin{longtable}{p{0.19\textwidth} p{0.25\textwidth}|p{0.19\textwidth} p{0.27\textwidth}}
\caption{Mapping of English and Chinese expressions in reasoning models.} \\
\toprule
\textbf{Chinese Pattern} & \textbf{English Pattern} & \textbf{Chinese Pattern} & \textbf{English Pattern} \\
\midrule
\endfirsthead

\caption[]{(continued)} \\
\toprule
\textbf{Chinese Pattern} & \textbf{English Pattern} & \textbf{Chinese Pattern} & \textbf{English Pattern} \\
\midrule
\endhead
\midrule
\endfoot
\bottomrule
\endlastfoot

参数代入与计算 & Parameter Substitution & 数据格式转换 & Data Conversion \\
分组因式分解 & Group Factorization & 几何图形优化 & Geometry Optimization \\
中位数统计计算 & Median Calculation & 分段函数积分 & Piecewise Integration \\
比例推理与分配 & Proportion Reasoning & 分数结构构造 & Fraction Construction \\
线性组合构造 & Linear Combination & 数学项式计算 & Term Calculation \\
阶段函数应用 & Step Function & 空间区域分割 & Space Partition \\
公式直接代入 & Formula Substitution & 百分比数值转换 & Percentage Conversion \\
分组变换方法 & Group Transform & 递归式展开 & Recursive Expansion \\
逆向映射推理 & Inverse Mapping & 集合元素比较 & Set Comparison \\
函数反演技巧 & Function Inversion & 代数项替换 & Term Replacement \\
方向性映射 & Direction Mapping & 条件概率分析 & Conditional Probability \\
数据排序整理 & Data Sorting & 指数模式匹配 & Exponential Matching \\
货币单位转换 & Currency Conversion & 数量关系分析 & Relation Analysis \\
凸包几何计算 & Convex Hull & 复合函数分解 & Function Decomposition \\
不等式逐步解析 & Inequality Parsing & 路径方案枚举 & Path Enumeration \\
模运算与替换 & Modular Substitution & 项数统计计数 & Term Counting \\
基本原理应用 & Basic Principle & 欧拉线几何性质 & Euler Line Property \\
线性变换操作 & Linear Transform & 公共因子提取 & Common Factor \\
交叉相乘消元 & Cross Multiplication & 排除筛选法 & Exclusion Method \\
集合穷举列举 & Set Enumeration & 多项式根系分析 & Polynomial Root Analysis \\
题目提示利用 & Hint Utilization & 多解方案筛选 & Multi-solution Filter \\
面积几何匹配 & Area Matching & 指数参数调整 & Exponential Adjustment \\
算式化简优化 & Operation Simplification & 共轭根识别 & Conjugate Root ID \\
逻辑推理验证 & Logic Reasoning & 问题分步分解 & Problem Decomposition \\
数学工具运用 & Math Tools & 边界条件检验 & Boundary Check \\
数值结果验证 & Numeric Verification & 结论确认总结 & Conclusion \\
逻辑演绎推导 & Logic Deduction & 目标明确化 & Goal Clarification \\
多方法交叉验证 & Multi-method Check & 数学概念应用 & Math Concepts \\
结果整合输出 & Result Integration & 条件约束检查 & Condition Check \\
问题抽象建模 & Problem Abstraction & 概念定义回顾 & Concept Definition \\
知识检索调用 & Knowledge Retrieval & 规律泛化推广 & Generalization \\
案例实例验证 & Case Verification & 反向逆推验证 & Reverse Verification 
\label{tab:Reasoning_Pattern_Translation} 
\end{longtable}
\end{CJK}

\begin{figure}[t]
    \centering
    \includegraphics[width=\textwidth]{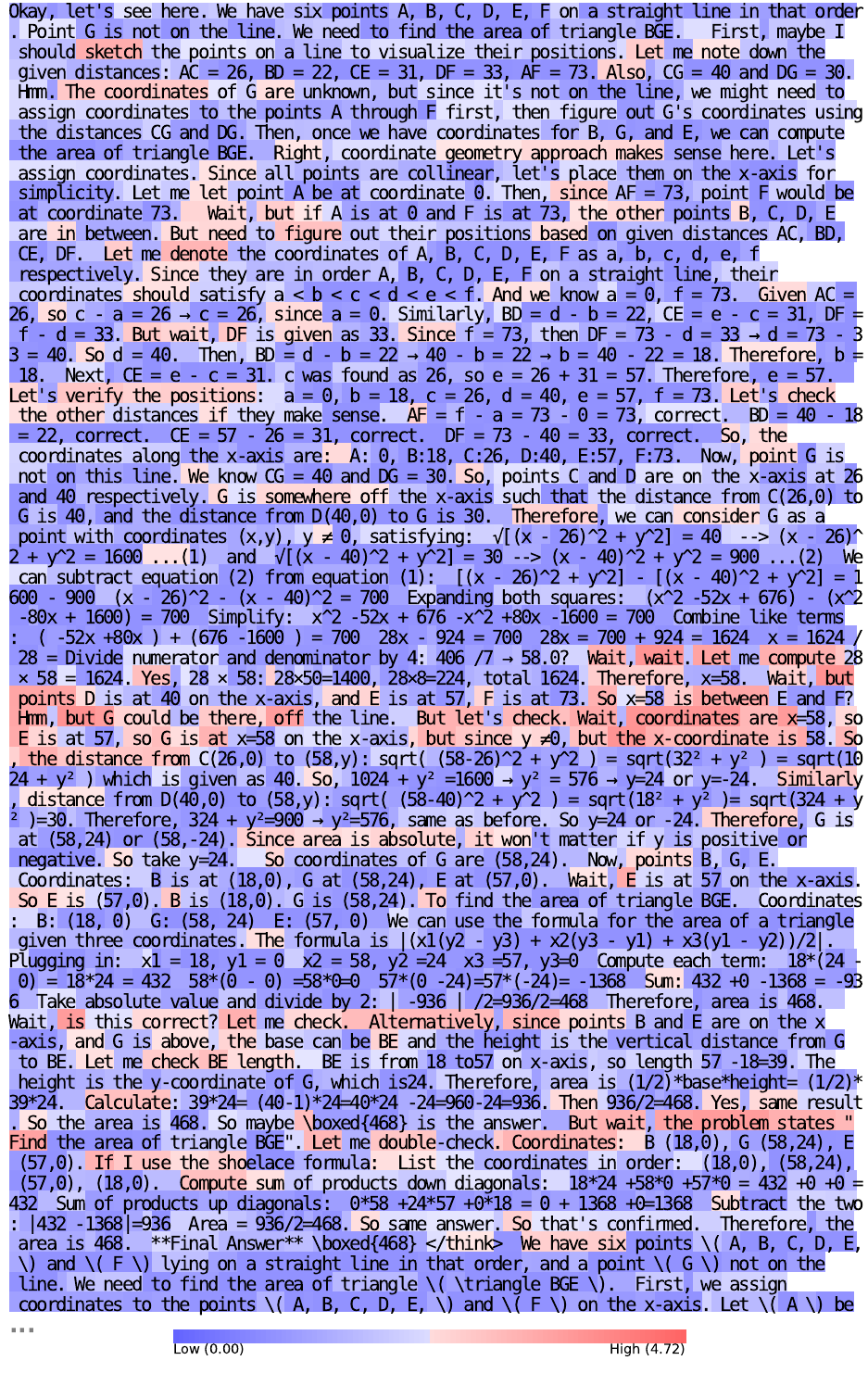}
    \caption{
    Example 1 of token entropy chain annotation for CoT data
    High-entropy tokens frequently indicate the underlying reasoning process of the model.
    }
    \label{fig:Qwen_7B_data_entropy_case_001_001}
\end{figure}

\begin{figure}[t]
    \centering
    \includegraphics[width=\textwidth]{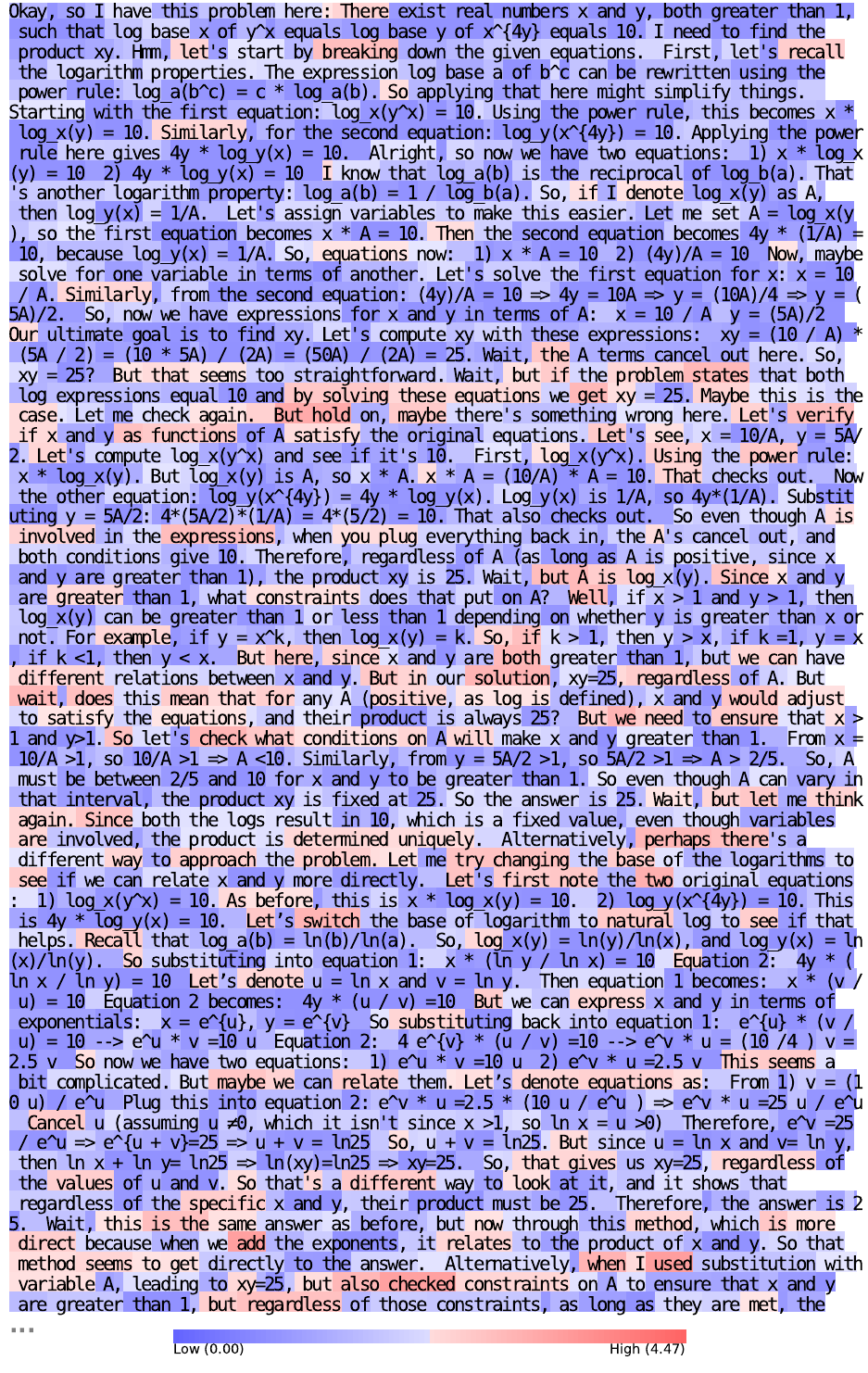}
    \caption{
    Example 2 of token entropy chain labeling.
    High-entropy tokens frequently indicate the underlying reasoning process of the model.
    }
    \label{fig:Qwen_7B_data_entropy_case_002_001}
\end{figure}

\subsection{Relationship between Reasoning Patterns and Reflection\label{app:reflection}}

The cases that describe the relationship between the reflection and the chains of reasoning patterns are shown in Table~\ref{tab:reflection_case}, including the key features and descriptions of the reflection process in LLM reasoning.

\begin{table}[ht]
    \caption{The relationship between the reflection and the chain of reasoning patterns.}
    \label{tab:reflection_case}
    \centering
    \setlength{\tabcolsep}{1mm}
    \small
    \begin{tabular}{p{2.5cm}|p{5cm}|p{5cm}}
        \toprule
        \textbf{Key Features} & \textbf{Description} & \textbf{Chains of Reasoning Patterns} \\
        \midrule
        Self-supervision and Self-correction & The model can actively review its own output, discover and correct logical, factual or expressive errors. This self-monitoring ability is at the core of the reflection process. & Structural Analysis and Counting, \textbf{Cross-validation and Error Correction}, Reaction Mechanism Deduction, Conditional Driven Path Selection, Structural Analysis and Counting, Reaction Mechanism Deduction, \textbf{Cross-validation and Error Correction} \\
        \midrule
        Multi-step Iteration & Reflection is not a one-off event but can be carried out in multiple rounds. The model can repeatedly check and correct its reasoning process until it outputs a more reasonable and accurate result. & Formula Substitution Method, Equation Simplification Method, \textbf{Iterative Trial and Error Method}, Verification Method \\
        \midrule
        Explicit Reasoning Chain & During the reflection process, models often explicitly review and analyze their reasoning steps (such as "chain of thought"), looking for the rationality and potential problems of each step. & \textbf{Pattern Recognition and Analogical Reasoning}, Construction of Recursive Relations, Exact Computation and Verification, Approximate Estimation and Boundary Determination, \textbf{Error Checking and Correction} \\
        \midrule
        Dynamic Adjustment & The model will dynamically adjust the reasoning direction, supplement information or change the reasoning strategy based on the reflection results to adapt to different problems and scenarios. & Variable Substitution and Recombination, Step-by-step Decomposition and Simplification, Substitution of Known Results, Numerical Verification and Confirmation, \textbf{Multi-method Cross-verification} \\
        \midrule
        Uncertainty Identification & During the reflection process, the model can identify uncertainties or knowledge blind spots in its own reasoning and attempt to make up for these deficiencies. & Knowledge Retrieval Reasoning, \textbf{Uncertainty Processing Reasoning}, Stepwise Computation Reasoning, Conditional Verification Reasoning \\
        \midrule
        Enhanced Interpretability & Through reflection, the model can output a more detailed reasoning process and correction reasons, enhancing the interpretability and transparency of the answers. & Understanding of Physical Processes, Derivation of Threshold Conditions, Verification of Mathematical Calculations, \textbf{Interpretation and Application of Results} \\
        \midrule
        Recursive Reasoning & Recursive reasoning means that after the model generates the output for the first time, it will return to certain key steps, concepts or assumptions to conduct reasoning again to verify or adjust the previous inference. This reflective process usually occurs in more complex tasks, especially in scenarios that require multiple rounds of reasoning. & Problem Decomposition and Sub-goal Setting, \textbf{Mathematical Structure Abstraction}, \textbf{Logical Equivalence Transformation}, Special Case Verification and Induction, Combinatorial Counting Technique, \textbf{Mathematical Structure Abstraction}, \textbf{Logical Equivalence Transformation}, Combinatorial Counting Technique \\
        \bottomrule
    \end{tabular}
\end{table}

\subsection{Token Entropy\label{app:case_entropy}}
Figure~\ref{fig:Qwen_7B_data_entropy_case_001_001} and Figure~\ref{fig:Qwen_7B_data_entropy_case_002_001} present two examples of entropy-granular labeling of CoT data utilizing the open-source Qwen-7B model. 
High-entropy tokens frequently reflect the model’s underlying reasoning process.
Our framework exploits the fine-grained token-level entropy distribution to identify nuanced reasoning patterns within the model’s CoT data.

\subsection{Problem, Reasoning Chain, and Extracted Patterns}
\label{case_study:Problem_Reasoning_Pattern_Chain}
\begin{longtable}{p{0.98\textwidth}}
\caption{Examples of questions, CoT sequences, pattern chains, and entropy chains.}
\label{tab:case_knowledge_point} \\
\toprule
\midrule
\endfirsthead

\caption[]{(continued)} \\
\toprule
\midrule
\endhead

\midrule
\endfoot

\bottomrule
\endlastfoot

\small \textbf{Case 1} \\
\midrule
\small \textit{\textbf{Question}}\\
Imagine as 2015 fractions:
$$\frac{2}{2016}, \frac{3}{2015}, \frac{4}{2014}, \ldots, \frac{2014}{4}, \frac{2015}{3}, \frac{2016}{2}$$
Is it possible to choose three of these fractions with a product equal to 1?
\\
\midrule
\small \textit{\textbf{Pattern Chain}}\\
Pattern Recognition and Structural Analysis $\rightarrow$ Equation Establishment and Transformation $\rightarrow$ Symmetry Analysis and Special Value Verification $\rightarrow$ Proof by Contradiction and No-Solution Exploration
\\
\midrule
\small \textit{\textbf{Chain-of-Thought}}\\
Okay, let's see. The problem is asking if it's possible to choose three fractions from the given 2015 fractions such that their product equals 1. The fractions are structured in a specific way: starting from 2/2016, then 3/2015, 4/2014, and so on, up to 2016/2. Each numerator increases by 1 while the denominator decreases by 1.

First, I need to understand the pattern here. The first fraction is 2/2016, the next is 3/2015, then 4/2014, continuing until 2016/2. So, the nth fraction would be (n+1)/(2017 - n), right? Let me check. For n=1, (1+1)/(2017 - 1) = 2/2016, which matches. For n=2, 3/2015, which also matches. So yes, each fraction can be represented as (k+1)/(2017 - k) where k ranges from 1 to 2015. Alternatively, maybe another way to index them? Let me see. If we let the numerator be m, then m starts at 2 and goes up to 2016, and the denominator is 2016 - (m - 2) = 2018 - m. Wait, because when numerator is 2, denominator is 2016; when numerator is 3, denominator is 2015, which is 2016 -1, so denominator for numerator m is 2018 - m. Because 2018 - 2 = 2016, 2018 - 3 = 2015, etc. So each fraction is m/(2018 - m), where m ranges from 2 to 2016. That seems correct.

So the fractions can be written as m/(2018 - m) for m = 2, 3, ..., 2016. So there are 2015 fractions total because 2016 - 2 + 1 = 2015.

We need to find three fractions such that their product is 1. Let's denote the three fractions as m1/(2018 - m1), m2/(2018 - m2), m3/(2018 - m3). Then their product is: (m1 $\times$ m2 $\times$ m3) / [(2018 - m1)(2018 - m2)(2018 - m3)] = 1

Therefore, the equation simplifies to: m1 $\times$ m2 $\times$ m3 = (2018 - m1)(2018 - m2)(2018 - m3)

So, we need three numbers m1, m2, m3 (each between 2 and 2016 inclusive) such that the product of the three m's equals the product of (2018 - m) for each m.

This seems like a symmetrical equation. Maybe we can look for some symmetry here. Let's consider that for each m, (2018 - m) is also a term in the fractions. For example, if m is 2, then 2018 - m = 2016, which is the denominator of the first fraction. Similarly, if m is 2016, then 2018 - 2016 = 2, which is the denominator of the last fraction. So each fraction m/(2018 - m) has a reciprocal fraction (2018 - m)/m. However, note that when m is 2018 - m, which would be when m = 1009, then the fraction is 1009/1009 = 1. But looking at the original fractions, starting from 2/2016 up to 2016/2, the middle fraction would be when m is around 1009. Let me check: since there are 2015 fractions, the middle one is at position 1008, but maybe m is 1009. Let's compute 2018 - m = 2018 - 1009 = 1009, so yes, when m=1009, the fraction is 1009/1009=1. So that fraction is 1, so multiplying by that fraction doesn't change the product. But the problem is asking for three fractions, so even if one of them is 1, the other two need to multiply to 1 as well. But 1 is only present once (if m=1009 is allowed). Wait, m ranges from 2 to 2016, so m=1009 is included. So the fraction 1009/1009 is part of the list. So if we include that, then maybe we can have two fractions that are reciprocals. For example, take 2/2016 and 2016/2. Their product is (2$\times$2016)/(2016$\times$2) = 1. Then if we include 1, the product would be 1$\times$1=1. Wait, but we need three fractions. So 2/2016 $\times$ 2016/2 $\times$ 1009/1009 = 1. So that's three fractions. But hold on, 2/2016, 2016/2, and 1009/1009. Are these all in the list? Yes. The first fraction is 2/2016, the last is 2016/2, and the middle one is 1009/1009. So their product is 1. So that would be a valid triplet. But wait, but the problem says "three of these fractions". So if 1009/1009 is allowed, then yes. But is 1009/1009 in the list? Let's check. When m=1009, denominator is 2018 - 1009 = 1009. So yes, that's the fraction 1009/1009. So that's in the list. Therefore, the triplet is 2/2016, 2016/2, and 1009/1009. So their product is (2 $\times$ 2016 $\times$ 1009)/(2016 $\times$ 2 $\times$ 1009) = 1. So yes, that works.
\\
Wait, but the problem is stated as: Is it possible to choose three of these fractions with a product equal to 1? \textbf{...}
\\
\midrule
\small \textit{\textbf{Token Entropy}}\\
\{Okay: 1.0781\}, \{,: 0.0693\}, \{ let: 1.3125\}, \{'s: 0.1729\}, \{ see: 2.7188\}, \{.: 0.7969\}, \{ The: 1.8203\}, \{ problem: 2.0938\}, \{ is: 1.9375\}, \{ asking: 1.0859\}, \{ if: 1.0938\}, \{ it: 1.2656\}, \{'s: 0.5078\}, \{ possible: 0.0173\}, \{ to: 0.0376\}, \{ choose: 1.1641\}, \{ three: 0.0491\}, \{ fractions: 0.8320\}, \{ from: 0.2393\}, \{ the: 0.1699\}, \{ given: 1.2109\}, \{ : 1.0313\}, \{2: 0.3691\}, \{0: 0.0021\}, \{1: 0.0044\}, \{5: 0.3613\}, \{ fractions: 0.3047\}, \{ such: 1.4297\}, \{ that: 0.0027\}, \{ their: 0.1670\}...
\\
\midrule
\midrule
\small \textbf{Case 2} \\
\midrule
\small \textit{\textbf{Question}}\\
Let $a, b, c$ be positive integers, and the quadratic equation $a x^{2}+b x+c=0$ has two real roots whose absolute values are both less than $\frac{1}{3}$. Find the minimum value of $a+b+c$.
\\
\midrule
\small \textit{\textbf{Pattern Chain}}\\
Condition Analysis and Constraint Extraction $\rightarrow$ Mathematical Derivation and Inequality Analysis $\rightarrow$ Systematic Enumeration and Verification $\rightarrow$ Two-Root Case Analysis
\\
\midrule
\small \textit{\textbf{Chain-of-Thought}}\\
Okay, so I need to find the minimum value of a + b + c where a, b, c are positive integers, and the quadratic equation $ax^2 + bx + c = 0$ has two real roots whose absolute values are both less than 1/3. Hmm, let's start by recalling some properties of quadratic equations and their roots.

First, for a quadratic equation to have real roots, the discriminant must be non-negative. The discriminant D is $b^2 - 4ac$. So, $D \geq 0$. That gives us one condition: $b^2 \geq 4ac$.

Next, the roots have absolute values less than 1/3. Let me denote the roots as $r_1$ and $r_2$. Then $|r_1| < 1/3$ and $|r_2| < 1/3$. Since the quadratic is $ax^2 + bx + c$, by Vieta's formulas, we have:

$r_1 + r_2 = -b/a$ and $r_1 \times r_2 = c/a$

Given that the absolute values of the roots are less than 1/3, we can derive some inequalities. Let's think about the implications. Since both roots are real and their absolute values are less than 1/3, they must lie in the interval (-1/3, 1/3). So both roots are between -1/3 and 1/3.

But since coefficients a, b, c are positive integers, let's consider the signs of the roots. If the quadratic equation has two real roots, and a, b, c are positive, then the quadratic opens upwards (since a $>$ 0). The sum of the roots is -b/a, which is negative because b and a are positive. The product of the roots is c/a, which is positive because c and a are positive. Therefore, both roots must be negative. So, actually, the roots are in the interval (-1/3, 0). That simplifies things a bit because we don't have to consider positive roots. Both roots are negative, so their absolute values are just their magnitudes.

Therefore, we can say that the roots $r_1$ and $r_2$ satisfy $-1/3 < r_1 \leq r_2 < 0$. Let me rephrase Vieta's formulas in terms of magnitudes. Let me set $r_1 = -s$ and $r_2 = -t$ where s and t are positive numbers less than 1/3. Then:

$r_1 + r_2 = -(s + t) = -b/a \Rightarrow s + t = b/a$ and $r_1 \times r_2 = st = c/a$

So s and t are both in (0, 1/3). So we have $s + t = b/a$ and $st = c/a$. Then, since a, b, c are positive integers, we need to find a, b, c such that these equations hold with s and t < 1/3.

Our goal is to minimize a + b + c. Since a, b, c are positive integers, the minimal value is likely to be small, so maybe a, b, c are small integers. But let's think through systematically.

Given that s and t are both less than 1/3, so $s + t < 2/3$ and $st < (1/3)(1/3) = 1/9$. But $s + t = b/a$ and $st = c/a$.

So from $st = c/a$, since $st < 1/9$, then $c/a < 1/9 \Rightarrow c < a/9$. But since c is a positive integer, this gives that a must be at least 10 for c to be at least 1 (since $1 < a/9 \Rightarrow a > 9$). Wait, but is that correct? Let's see: If $c/a < 1/9$, then $c < a/9$. Since c must be at least 1, then $1 \leq c < a/9 \Rightarrow a > 9c$. But since c is at least 1, a must be greater than 9. So the minimal possible a is 10? Wait, but maybe c can be 1 and a = 10, but then c = 1, a = 10. Then $1 < 10/9 \approx 1.11$, so that would satisfy $1 < 10/9$. But 1 is not less than 10/9, it's equal? Wait, no. Wait, c is less than a/9. So $1 < a/9 \Rightarrow a > 9$.\textbf{...}
\\
\midrule
\small \textit{\textbf{Token Entropy}}\\
\{Okay: 0.8789\}, \{,: 0.1445\}, \{ so: 1.6484\}, \{ I: 1.4453\}, \{ need: 1.9688\}, \{ to: 0.0149\}, \{ find: 0.8711\}, \{ the: 0.3203\}, \{ minimum: 0.3379\}, \{ value: 0.1523\}, \{ of: 0.0513\}, \{ a: 0.9844\}, \{ +: 1.9609\}, \{ b: 0.0005\}, \{ +: 0.0004\}, \{ c: 0.0007\}, \{ where: 1.4141\}, \{ a: 0.9961\}, \{,: 0.0122\}, \{ b: 0.0011\}, \{,: 0.0127\}, \{ c: 0.5039\}, \{ are: 0.0103\}, \{ positive: 0.1445\}, \{ integers: 0.0110\}, \{,: 0.7500\}, \{ and: 0.1797\}, \{ the: 0.2676\}, \{ quadratic: 0.3613\}, \{ equation: 0.2051\}\textbf{...}
\\
\midrule
\midrule
\small \textbf{Case 3} \\
\midrule
\small \textit{\textbf{Question}}\\
Symmetric about the $y$-axis, passing through the focus of the parabola $y^{2}=4 x$, and divided into two arcs with a length ratio of $1: 2$ by the line $y=x$, the equation of the circle is $\qquad$
\\
\midrule
\small \textit{\textbf{Pattern Chain}}\\
Knowledge Retrieval and Verification $\rightarrow$ Symmetry Analysis $\rightarrow$ Equation Construction and Solution $\rightarrow$ Geometric Condition Transformation $\rightarrow$ Vector Analysis $\rightarrow$ Multiple Solution Verification
\\
\midrule
\small \textit{\textbf{Chain-of-Thought}}\\
Okay, let me try to work through this problem step by step. So, the problem is about finding the equation of a circle that's symmetric about the y-axis, passes through the focus of the parabola $y^2 = 4x$, and is divided into two arcs by the line $y = x$ with a length ratio of 1:2. Hmm, that seems a bit involved, but let's break it down.

First, let's recall some basics. The parabola $y^2 = 4x$. The standard form of a parabola that opens to the right is $y^2 = 4ax$, where $(a, 0)$ is the focus. Comparing that to the given equation, $4a = 4$, so $a = 1$. Therefore, the focus of this parabola is at $(1, 0)$. So the circle we're looking for passes through $(1, 0)$. Got that.

Next, the circle is symmetric about the y-axis. That means if $(x, y)$ is on the circle, then $(-x, y)$ is also on the circle. So the equation of the circle must be of the form $(x - h)^2 + (y - k)^2 = r^2$, but since it's symmetric about the y-axis, $h$ must be 0. Because if the center is $(h, k)$, then symmetry about the y-axis implies that $h = -h$, so $h = 0$. Therefore, the equation simplifies to $x^2 + (y - k)^2 = r^2$. So the center is at $(0, k)$ and radius $r$. So we need to find $k$ and $r$.

We know the circle passes through the focus $(1, 0)$. Let's plug that point into the equation to get an equation relating $k$ and $r$. Plugging in $x=1$, $y=0$:

$1^2 + (0 - k)^2 = r^2 \Rightarrow 1 + k^2 = r^2$. That's one equation.

Now, the other condition is that the line $y = x$ divides the circle into two arcs with a length ratio of 1:2. Hmm, okay. So the circumference of the circle is $2\pi r$, and the line $y = x$ intersects the circle at two points, dividing it into two arcs. The ratio of their lengths is 1:2, so the shorter arc is $(1/3) \times 2\pi r$ and the longer arc is $(2/3) \times 2\pi r$. So the central angles corresponding to these arcs should be 120 degrees and 240 degrees, since the ratio 1:2 corresponds to angles of $120°$ and $240°$ (which sum to $360°$). So the angle between the two intersection points as viewed from the center of the circle should be 120 degrees.

Alternatively, maybe the line $y = x$ cuts the circle such that the arcs are in a 1:2 ratio. To find the central angle corresponding to each arc, since arc length is proportional to the central angle. So if the ratio is 1:2, the angles are $\theta$ and $2\theta$, where $\theta + 2\theta = 360°$, so $\theta = 120°$. Wait, no. Wait, 1:2 ratio in terms of length. So total circumference is 3 parts, so each part is $120°$, $240°$. So the central angles are $120°$ and $240°$, so the central angle between the two intersection points is $120°$, right?\textbf{...}
\\
\midrule
\small \textit{\textbf{Token Entropy}}\\
\{Okay: 1.1641\}, \{,: 0.1143\}, \{ let: 1.1484\}, \{ me: 0.1060\}, \{ try: 2.5000\}, \{ to: 0.3301\}, \{ work: 1.7969\}, \{ through: 1.0313\}, \{ this: 0.5078\}, \{ problem: 0.9023\}, \{ step: 0.4648\}, \{ by: 0.6523\}, \{ step: 0.0007\}, \{.: 1.3359\}, \{ So: 1.8438\}, \{,: 1.0781\}, \{ the: 1.3672\}, \{ problem: 2.5938\}, \{ is: 2.5938\}, \{ about: 1.5391\}, \{ finding: 0.7813\}, \{ the: 0.4668\}, \{ equation: 0.0344\}, \{ of: 0.0060\}, \{ a: 0.0527\}, \{ circle: 0.0757\}, \{ that: 0.8398\}, \{'s: 1.2188\}, \{ symmetric: 0.2373\}, \{ about: 0.1069\}\textbf{...}
\\
\midrule
\end{longtable}

\end{document}